\documentclass{article}

\usepackage{microtype}
\usepackage{graphicx}
\usepackage{subcaption}
\usepackage{booktabs} % for professional tables
\usepackage{csquotes}

\usepackage{hyperref}
\usepackage{lipsum}
\usepackage{amsfonts}
\usepackage{epstopdf}

\usepackage[accepted]{icml2026}

\usepackage{amsmath}
\usepackage{amssymb}
\usepackage{mathtools}
\usepackage{amsthm}

\usepackage[capitalize,noabbrev]{cleveref}

\theoremstyle{plain}
\newtheorem{theorem}{Theorem}[section]

\newtheorem{lemma}[theorem]{Lemma}

\theoremstyle{definition}

\theoremstyle{remark}

\usepackage[textsize=tiny]{todonotes}

\icmltitlerunning{Decision Tree Learning on Product Spaces}

\begin{document}

\newcommand{\suppfigref}[1]{\ref{#1}}
\newcommand{\suppalgref}[1]{\ref{#1}}
\newcommand{\Aerror}[0]{\textnormal{error}}
\newcommand{\cost}[0]{\textnormal{cost}}
\newcommand{\iinf}[0]{\textnormal{Inf}}
\newcommand{\eff}[0]{\textnormal{eff}}
\newcommand{\Score}[0]{\textnormal{Score}}
\newcommand{\sens}[0]{\textnormal{sens}}
\newcommand{\Var}[0]{\textnormal{Var}}
\newcommand{\leaves}[1]{\textnormal{leaves}({#1})}
\newcommand{\size}[0]{\textnormal{size}}

\twocolumn[
  \icmltitle{Decision Tree Learning on Product Spaces}

  % It is OKAY to include author information, even for blind submissions: the
  % style file will automatically remove it for you unless you've provided
  % the [accepted] option to the icml2026 package.

  % List of affiliations: The first argument should be a (short) identifier you
  % will use later to specify author affiliations Academic affiliations
  % should list Department, University, City, Region, Country Industry
  % affiliations should list Company, City, Region, Country

  % You can specify symbols, otherwise they are numbered in order. Ideally, you
  % should not use this facility. Affiliations will be numbered in order of
  % appearance and this is the preferred way.
  \icmlsetsymbol{equal}{*}
  \begin{icmlauthorlist}
    \icmlauthor{Arshia Soltani Moakhar}{umd}
    \icmlauthor{Faraz Ghahremani}{umd}
    \icmlauthor{Kiarash Banihashem}{umd}
    \icmlauthor{MohammadTaghi Hajiaghayi}{umd}
    %\icmlauthor{}{sch}
    %\icmlauthor{}{sch}
  \end{icmlauthorlist}

  \icmlaffiliation{umd}{Department of Computer Science, University of Maryland, College Park, USA}
  \icmlcorrespondingauthor{MohammadTaghi Hajiaghayi}{hajiagha@umd.edu}

  % You may provide any keywords that you find helpful for describing your
  % paper; these are used to populate the "keywords" metadata in the PDF but
  % will not be shown in the document
  \icmlkeywords{Label complexity, Theory of Active Learning, Theory of Decision Tree, Product Space, Machine Learning, ICML}

  \vskip 0.3in
]

% this must go after the closing bracket ] following \twocolumn[ ...

% This command actually creates the footnote in the first column listing the
% affiliations and the copyright notice. The command takes one argument, which
% is text to display at the start of the footnote. The \icmlEqualContribution
% command is standard text for equal contribution. Remove it (just {}) if you
% do not need this facility.

% Use ONE of the following lines. DO NOT remove the command.
% If you have no special notice, KEEP empty braces:
\printAffiliationsAndNotice{}  % no special notice (required even if empty)
% Or, if applicable, use the standard equal contribution text:
% \printAffiliationsAndNotice{\icmlEqualContribution}

\begin{abstract}
Decision tree learning has long been a central topic in theoretical computer science, driven by its practical importance. A fundamental and widely used method for decision tree construction is the top-down greedy heuristic, which recursively splits on the most influential variable. Despite its empirical success, theoretical analysis of this heuristic has been limited. A recent breakthrough by Blanc et al. (ITCS, 2020) provided the first rigorous theoretical guarantees for the greedy approach, but only under the uniform distribution.
We extend this analysis to the more general and practically relevant setting of arbitrary product distributions.
Our main result shows that for any function $f$ computable by an optimal decision tree of size $s$, maximum depth $D_{\text{opt}}$, and average depth $\Delta_{\text{opt}}$, the greedy heuristic constructs an $\epsilon$-approximating tree whose size grows at most with $\exp\bigl(\Delta_{\text{opt}} D_{\text{opt}} \log(e/\epsilon)\bigr)$.
In the special case where the optimal tree is a full binary tree, this bound improves upon the bound of Blanc et al. and holds under a strictly broader class of distributions. Moreover, we present an algorithm based on the top-down greedy heuristic that is entirely \textbf{parameter-free}---it requires no prior knowledge of the optimal tree's size or depth---offering a practical advantage over Blanc et al.'s method.

\end{abstract}

\section{Introduction}

Decision trees are a fundamental model in machine learning and algorithmic theory, valued for their interpretability, low evaluation cost, and broad empirical success. The core algorithmic task is to efficiently learn a decision tree approximating a target function. This challenge has driven a long line of theoretical work, yielding a rich collection of algorithms and complexity bounds across various settings~\cite{ehrenfeucht1989learning} (Information and Computation'89), 
\cite{saks1986probabilistic} (FOCS'86), 
\cite{kushilevitz1991learning} (STOC'91),
\cite{mehta2002decision} (TCS'02), 
\cite{o2007learning} (FOCS'07),
\cite{gopalan2008agnostically} (STOC'08),
\cite{TopDownInduction} (ITCS'20), 
\cite{brutzkus2020id3} (COLT'20),
\cite{blanc2022open} (COLT'22),
\cite{blanc2022properly} (FOCS'22),
\cite{moakhar2026active} (ICLR'26),
\cite{koch2023superpolynomial} (SODA'23),
\cite{koch2024superconstant} (COLT'24).

Much of the theoretical analysis of decision tree algorithms has relied on the simplifying assumption that inputs are drawn from the uniform distribution over $\{0,1\}^d$ \cite{mehta2002decision, TopDownInduction, gopalan2008agnostically, blanc2022properly}.
% A foundational assumption in the theoretical analysis of decision tree algorithms has been that inputs are drawn from the uniform distribution over $\{0,1\}^d$ \cite{mehta2002decision, TopDownInduction, gopalan2008agnostically, blanc2022properly}.
% While analytically convenient, this is a significant departure from real-world scenarios where data features are often independent but not identically distributed. 
While analytically convenient, this assumption departs significantly from real-world settings, where data features often exhibit heterogeneous distributions.
The class of general product distributions more accurately models such data, and understanding decision tree complexity in this setting has become a central line of inquiry \cite{EveryDecisionTree}, \cite{kalai2008decision}, \cite{brutzkus2020id3}. 
Hence, we provide our analysis for this group of distributions.  %is this good?

In addition, most prior work introduces and analyzes algorithms for decision tree learning that have little to do with the algorithms used in practice. As a result, the strong performance of decision tree learning algorithms used in practice remains poorly understood. Recently, a paper by \cite{TopDownInduction} analyzes the top-down greedy approach to decision tree learning, which is closely related to the algorithms used in practice. We follow this work and analyze the same top-down greedy approach for a more general class of underlying data distributions, and derive performance guarantees that outperform the bounds of \cite{TopDownInduction}.

% In this paper, we study the classic top-down greedy approach to decision tree construction under unknown product distributions. At each step, the algorithm selects a variable to split on, guided by statistical estimates of variable relevance. While simple and widely used in practice, the theoretical behavior of this heuristic under non-uniform inputs has remained poorly understood.

% Our main result provides the first provable guarantees for this top-down algorithm in the unknown product distribution model. We analyze a natural instantiation of the greedy algorithm—closely related to ID3—that chooses variables based on empirical estimates of influence. We show that, under mild assumptions, the algorithm constructs a small decision tree that closely approximates the target function with high probability. Our analysis requires new tools to handle both the distributional uncertainty and the recursive structure of the tree-building process.

% These results demonstrate that greedy top-down decision tree learning can be extended to the realistic and challenging regime of unknown product distributions, yielding provable guarantees that align with practical performance.

\subsection{The Greedy Heuristic and Influence-Based Splitting}

One of the most natural and widely employed algorithmic approaches to building a decision tree is to proceed in a top-down, greedy fashion that operates by iteratively growing the tree. This method starts from a one-leaf decision tree and expands it by recursively selecting the \enquote{best} variable to query at one of the leaves.

This framework reduces the complex task of tree construction to a sequence of locally optimal decisions: choosing a splitting criterion at each node. Intuitively, a good variable to query should be one that is highly \enquote{relevant} to the function's output, thereby simplifying the remaining sub-problems as much as possible. A canonical measure of such relevance is the influence of a variable.

Let $\mu = \mu_1 \times \dots \times \mu_n$ be a product distribution over $\{0,1\}^n$, where for each coordinate $i \in [n]$, a bit is chosen to be 1 with probability $p_i$ and 0 with probability $1-p_i$. The influence of the $i$-th variable on a function $f: \{0,1\}^n \to \{\pm 1\}$ with respect to $\mu$ is defined as:
\[
\iinf_i^\mu(f) := \text{Pr}_{x \sim \mu}[f(x) \ne f(x^{(i)})]
\]
where $x^{(i)}$ denotes $x$ with its $i$-th coordinate re-randomized according to $\mu_i$.

The top-down heuristic we analyze, formalized as \textit{BuildTopDownDT} by \cite{TopDownInduction}, operates by iteratively growing a decision tree. In each step, the algorithm scores every leaf in the current tree. The score for a leaf $l$ is defined as the probability of an input reaching that leaf, multiplied by the influence of the \textit{most} influential variable for the subfunction at that leaf. The algorithm then selects the leaf with the highest score and splits it by adding a new decision node corresponding to its most influential variable. This process is repeated until there exists a label assignment to the tree's leaves that is an $\epsilon$-approximation of the target function. This influence-based splitting criterion is the basis of celebrated algorithms like ID3 \cite{quinlan1986id3}, C4.5 \cite{quinlan1993c45}, and CART \cite{breiman1984cart}, which use impurity measures such as entropy or the Gini index that are closely related to influence.

Blanc et al.~\cite{TopDownInduction} show that for any function $f$ with an optimal decision tree of size $s$, the greedy heuristic produces an $\epsilon$-approximating tree of size at most $s^{O(\log(s/\epsilon)\log(1/\epsilon))}$. However, their analysis is intrinsically tied to the rich Fourier-analytic structure of the Boolean hypercube under the uniform distribution. Key steps in their proof leverage powerful inequalities relating variance and influence that are well-established in the uniform distribution setting. This reliance on uniform distribution leaves a critical gap between theory and practice, as real-world data is rarely uniformly distributed. We fill this gap by generalizing the analysis to arbitrary product distributions.

\subsection{Our Contributions}
% add order
% we don't need s (link algorithm)
% we are faster in full binary tree

Our paper bridges the gap between the theoretical analysis of decision tree learning and its practical application by extending the analysis of the top-down greedy heuristic to the general product space setting. Our main result demonstrates that the performance guarantees of the influence-based greedy heuristic are robust and hold even without the structural convenience of the uniform distribution, a setting where prior rigorous guarantees were largely limited.

Our analysis focuses on a top-down greedy heuristic, an approach that closely mirrors the simple and practical algorithms used widely in machine learning. This contrasts with prior theoretical works that often rely on more complex mechanisms. Because these approaches diverge from the heuristics used in practice, they offer limited insight into the reasons for the strong empirical performance of common algorithms. For instance, some foundational algorithms build the tree bottom-up or involve brute-force components over potential splits 
\cite{blanc2022properly, ehrenfeucht1989learning}, while other recent advances achieve faster runtimes by designing more intricate algorithms that deviate from the simple greedy heuristic. Our contribution is to provide rigorous performance guarantees for the classic greedy method itself, helping to bridge this gap between theoretical understanding and practical application.

Our main theoretical result is stated below:

\newcommand{\thmMainResultBody}{%
Let $f: \{0,1\}^n \to \{\pm 1\}$ be any Boolean function that can be computed by a decision tree $T_{opt}$ of depth $D_{opt}$ and average depth $\Delta_{opt}$ under an arbitrary product distribution $\mu$. For any $\epsilon > 0$, Algorithm \ref{alg:buildtopdowndt} returns a decision tree that $\epsilon$-approximates $f$ and has size at most:
\[
\max\left(\left(\frac{e \cdot \Delta_{opt}}{\epsilon D_{opt}}\right)^{\Delta_{opt} D_{opt}}, e^{\Delta_{opt} D_{opt}}\right)
\]
}

\begin{theorem}[Main Result]
\label{theorem:main_result}
\thmMainResultBody
\end{theorem}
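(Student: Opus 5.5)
The plan is to follow the potential-function approach of \cite{TopDownInduction}, replacing their Fourier-analytic inequalities with product-space statements that come directly from the structure of $T_{opt}$. The potential is the cost $\cost_\mu(T) := \sum_{\ell \in \leaves{T}} \Pr_\mu[x \text{ reaches } \ell]\cdot \Var_\mu(f_\ell)$, i.e.\ the total restricted variance at the leaves of the current greedy tree $T$. We have $\Var(f)\le 1$ initially. Once the cost drops to at most about $\epsilon$, labelling each leaf by the majority value of $f_\ell$ gives an $\epsilon$-approximation. Recall that for $\pm1$-valued $g$ the distance from a constant is $\min(\Pr[g=1],\Pr[g=-1]) \le \Var(g)/2$ up to constants.

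\textbf{Step 1 (variance is bounded by influence through the optimal tree).} For any restriction $\rho$ (a leaf of the greedy tree), the subfunction $f_\rho$ is computed by $T_{opt}$ restricted by $\rho$, which still has depth at most $D_{opt}$. I will prove the product-space inequality
\[
\Var_\mu(f_\rho) \le \sum_{i} \delta_i(T_{opt}|_\rho)\, \iinf_i^\mu(f_\rho),
\]
where $\delta_i$ is the probability that the tree queries $x_i$. This is the product-measure analogue of O'Donnell--Saks--Schramm--Servedio. Its proof is the standard hybrid argument: the variance is at most $\Pr[f(x)\ne f(y)]$, and we interpolate between independent $x,y\sim\mu$ along the path of the tree. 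This argument uses only independence of the coordinates, not uniformity. Since $\sum_i \delta_i$ is the expected depth, this gives
\[
\max_i \iinf_i(f_\rho) \ge \Var(f_\rho)/\Delta(T_{opt}|_\rho).
\]

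\textbf{Step 2 (each split decreases the cost).} Splitting leaf $\ell$ on $x_i$ reduces the cost by exactly $\Pr[\ell]\cdot \Var$-decrease. In the product space, $\Var(f_\ell) - \mathbb{E}_{b}[\Var(f_{\ell,x_i=b})] = \Var_{x_i}(\mathbb{E}[f_\ell\mid x_i])$. I need to lower-bound this by a constant times $\iinf_i(f_\ell)$, or else run the argument directly with an influence-based potential. My first attempt is to use the fact that the cost decrease at the chosen leaf is at least $\Pr[\ell]\,\iinf_i(f_\ell)$ up to constants, combined with the score rule that picks $\max_\ell \Pr[\ell]\max_i \iinf_i$.

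\textbf{Step 3 (counting argument).} Because greedy takes the maximum score, at any time when the cost $c$ exceeds $\epsilon$, the score of the split it makes is at least
\[
\frac{\sum_\ell \Pr[\ell]\Var(f_\ell)}{\Delta_{opt}\cdot |T|} \ge \frac{c}{\Delta_{opt}|T|}
\]
by averaging over the $|T|$ leaves. Averaging the restricted average depths against $\Pr[\ell]$ gives at most $\Delta_{opt}$, up to the restriction subtlety handled in Step 4. This yields the recurrence $c_{t+1}\le c_t(1-1/(\Delta_{opt} t))$, so $c_t \le t^{-1/\Delta_{opt}}$ roughly. Reaching $\epsilon$ would then take $t \approx \epsilon^{-\Delta_{opt}}$ splits. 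That is too optimistic, and the extra factor $D_{opt}$ must enter through how depth is consumed.

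\textbf{Step 4 (main obstacle: where $D_{opt}$ and the exact form come from).} I expect the main obstacle is that restricting $T_{opt}$ by a greedy path does not reduce its depth uniformly. Leaves of the greedy tree at which all $D_{opt}$ relevant variables have been queried are constant, so $\Var=0$. I would therefore stratify leaves by how many $T_{opt}$-path variables they have fixed, and let the per-leaf average depth be bounded by $D_{opt}$ in the worst case. This yields per-step decay $1-\epsilon/(\Delta_{opt}\,\cdot)$-type factors whose product, optimized over the $\Delta_{opt}D_{opt}$ ``depth units'', gives $\binom{t}{\Delta_{opt}D_{opt}}$-style counts. I would then bound $\binom{m}{k}\le (em/k)^k$ with $k=\Delta_{opt}D_{opt}$ and $m/k \approx \Delta_{opt}/(\epsilon D_{opt})$, which matches the first term. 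The $e^{\Delta_{opt}D_{opt}}$ term covers the regime where $\Delta_{opt}/(\epsilon D_{opt})<1$.
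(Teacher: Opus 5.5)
\textbf{Gap in Step 2.} Everything downstream depends on this step, and it fails. With the variance potential $\sum_\ell p_\ell\Var(f_\ell)$, splitting $\ell$ on $x_i$ lowers the potential by $p_\ell\Var_{x_i}(\mathbb{E}[f_\ell\mid x_i])$. This quantity has no lower bound in terms of $p_\ell\iinf_i(f_\ell)$. If $f_\ell$ is the parity of two unbiased bits $x_i,x_k$, then $\mathbb{E}[f_\ell\mid x_i]\equiv 0$, yet $\iinf_i(f_\ell)=1/2$ is the maximum influence. So the greedy split makes zero progress on your potential, and the ``fact'' you invoke is false.

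As a result, the recurrence $c_{t+1}\le c_t(1-1/(\Delta_{opt}t))$ in Step 3 is unsupported. The missing $D_{opt}$ is not about how depth is consumed; it is about which potential you track. The paper takes the alternative you mention only in passing: $\cost(T^\circ)=\sum_\ell p_\ell\iinf(f_\ell)$. This potential drops by \emph{exactly} $\Score(\ell)$, because $\iinf(g)=\iinf_i(g)+(1-\mu_i)\iinf(g_{x_i=0})+\mu_i\iinf(g_{x_i=1})$ with $\mu_i=\Pr[x_i=1]$ (Lemma~\ref{lemma:cost_drops_as_score}). Tracking it requires two facts your plan lacks. First, $\Aerror(g,\pm1)\le\iinf(g)$ (Lemma~\ref{lemma:error_and_cost}, by induction on the number of variables), so that cost at most $\epsilon$ forces termination. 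Second, the starting bound $\iinf(f)\le\Delta_{opt}$.

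\textbf{Where $D_{opt}$ and the exact form come from.} The switch of potential is exactly where $D_{opt}$ enters; no stratification of leaves or binomial counting is involved. OSSS (Lemma~\ref{EveryDecisionTreeCorollary}) bounds $\max_i\iinf_i(g)$ below by $\Var(g)/\Delta$, but the potential is a total influence. So one needs $\iinf(g)\le D(g)\Var(g)$ (Lemma~\ref{lemma:inf_is_less_than_var_d}). To get it, write $\iinf(g)=\sum_v p_v\iinf_{i(v)}(g_v)$ over the internal nodes of a depth-$D$ tree for $g$, bound each term by $\Var(g_v)$, and apply the law of total variance on each of the $D$ levels.

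This gives $\Score(l^*)\ge\cost/(jD_{opt}\Delta_{opt})$, hence $C_{t+1}\le\Delta_{opt}\,t^{-1/(D_{opt}\Delta_{opt})}$. The cost therefore falls to $\epsilon D_{opt}$ within $(\Delta_{opt}/(\epsilon D_{opt}))^{D_{opt}\Delta_{opt}}$ steps.

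A second phase uses the error-based bound $\Score(l^*)\ge\epsilon/(j\Delta_{opt})$: some leaf carries error mass at least $\epsilon/j$, and $\Var\ge2\Aerror$. Burning the remaining $\epsilon D_{opt}$ of cost then needs $\sum 1/j\ge D_{opt}\Delta_{opt}$ over that phase, i.e.\ a further factor $e^{D_{opt}\Delta_{opt}}$ in the step count. That factor is the source of the $e$ and of the second term in the maximum, not an estimate $\binom{m}{k}\le(em/k)^k$.

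\textbf{On the restriction subtlety.} The paper does not stratify either. It simply uses $D(f_l)\le D_{opt}$ and $\Delta(f_l)\le\Delta_{opt}$ leaf by leaf. The second inequality is not automatic: restricting $T_{opt}$ to a low-probability branch can raise its average depth. Your weighted identity $\sum_\ell p_\ell\Delta(T_{opt}|_\ell)\le\Delta_{opt}$ is true, but it does not yield the leafwise bound that your Step 3 averaging needs.
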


We note that in the specific case of a full binary tree, our performance bound simplifies to $s^{\log s\cdot \log(e/\epsilon)}$, or more generally in balanced decision trees when $\Delta_{opt}$ and $D_{opt}$ are in $O(\log s)$ we get $s^{O(\log s\cdot \log(e/\epsilon))}$. This bound is tighter and improves upon the one established by \cite{TopDownInduction}, but also applies to a broader class of distributions.

The two parameters $D_{opt}$ and $\Delta_{opt}$ enter Theorem~\ref{theorem:main_result} for distinct reasons. The maximum depth $D_{opt}$ enters when relating total influence to variance   (Lemma~\ref{lemma:inf_is_less_than_var_d}). The average depth $\Delta_{opt}$ enters through the max-influence inequality $\max_i \iinf_i(f)\ge \Var(f)/\Delta(T)$ of \citet{EveryDecisionTree}. Two regimes are worth highlighting. In the \emph{balanced} regime ($D_{opt}=\Delta_{opt}=\log s$), the bound becomes $s^{\log(s)\,\log(e/\epsilon)}$, which is quasi-polynomial in $s$. In the \emph{path-like} regime, $D_{opt}$ can be as large as $n$ while the average depth $\Delta_{opt}$ remains a small constant (see the path-tree example in Appendix~\ref{app:examples:path}); the product $\Delta_{opt}D_{opt}$ is then exponentially smaller than $D_{opt}^2$, so the mixed dependence on the two depth parameters is strictly tighter than a bound stated in $D_{opt}$ alone.

In addition, we provided the first practical parameter-free implementation of \textit{BuildTopDownDT}. In contrast to \cite{TopDownInduction}, we do not assume any prior knowledge of the optimal tree's size $(s)$ or depth, which is a notable practical advantage.

\subsection{Broader Context of Decision Tree Learning}

Our work is situated within a broader line of research on learning decision trees in quasi-polynomial time. The seminal work of Ehrenfeucht and Haussler \cite{ehrenfeucht1989learning} provided a distribution-free algorithm for properly learning a decision tree of size $s$ in time $n^{O(\log s)}$. Subsequent work has largely focused on achieving faster runtime, typically by restricting the problem to the uniform distribution. Mehta et al. \cite{mehta2002decision} introduced a dynamic programming approach with a runtime of $n^{O(\log(s/\epsilon))}$. More recently, Blanc et al. \cite{blanc2022properly} achieved a nearly-polynomial runtime of $n^{O(\log \log n)}$ by designing a more complex algorithm that deviates from the simple greedy heuristic, instead considering a polylogarithmic number of influential variables at each step.

While these works argue that they can be used to understand the practical success of decision trees, they only analyzed brute-force algorithms which deviate significantly from practical algorithms. In contrast to these works that design novel or more complex algorithms for the uniform case, our contribution is to provide the first rigorous performance guarantees for the classic, simple greedy heuristic in the more general setting of arbitrary product distributions. This analysis addresses a long-standing open question and helps bridge the gap between the theoretical understanding of fundamental learning algorithms and their widespread practical use.

\section{Related Works}
% 1989. Not similar to practical. bad for very small s, bad memory.
\cite{ehrenfeucht1989learning} presented an algorithm for learning decision trees of size $s$ in time $n^{O(\log s)}$, which is robust to arbitrary data distributions. The algorithm employs a backtracking approach that exhaustively searches over all decision trees of small rank.

%Their work returns possibly very large decision trees.  how large exactly

%DP Only uniform, not similar to practical 
The landscape of quasi-polynomial time algorithms was further developed by \cite{mehta2002decision}, who presented a dynamic programming approach for learning decision trees specifically under the uniform distribution. Their algorithm constructs an optimal tree subject to size and height constraints by exhaustively building solutions for all partial assignments up to a given depth. This bottom-up, exhaustive search guarantees a hypothesis of size at most $s$ but has a runtime of $n^{O(\log(s)/\epsilon)}$. In contrast, our work analyzes a top-down, greedy heuristic that can be generalized beyond the uniform distribution.

% properly learning a decision tree. Not similar to practical, only uniform. 

In a related line of work, \cite{blanc2022properly} focused on improving the runtime for learning decision trees under the uniform distribution. They proposed a more sophisticated algorithm that achieves an almost-polynomial runtime of $(s/\epsilon)^{O(\log \log (s/\epsilon))}$. Their approach departs from the standard greedy heuristic by evaluating a broader set of $\textit{polylog}(s)$ influential variables at each step. This is made possible by a structural result showing that any decision tree can be pruned such that every remaining node has sufficiently high influence.

The work of \cite{blanc2022properly} and our own address complementary but orthogonal questions. While their focus is on designing a faster, more intricate algorithm tailored to the uniform distribution, our work analyzes the performance of the original, simpler greedy heuristic in the more general---and practically relevant---setting of arbitrary product distributions. 

% Practical / classical algorithms.
\paragraph{Classical greedy implementations.}
Greedy top-down decision tree induction has a long history in practice through algorithms such as ID3 \cite{quinlan1986id3}, C4.5 \cite{quinlan1993c45}, and CART \cite{breiman1984cart}. These methods all share the same skeleton as Algorithm~\ref{alg:buildtopdowndt}: starting from a single leaf, they recursively grow the tree by splitting on a locally-best feature and stopping at a leaf when an impurity-based criterion (entropy gain for ID3/C4.5, Gini impurity for CART) drops below a threshold. The impurity measures used by these algorithms are closely related to coordinate influence under product distributions: entropy and Gini both quantify the variance reduction achieved by a split, and our analysis shows that influence-based splitting --- a direct theoretical analogue --- enjoys provable guarantees in the same regime. Most prior theoretical analyses, however, study non-greedy variants and therefore do not directly speak to the success of these classical algorithms; our results help fill that gap.

% Lower bounds and hardness.
\paragraph{Lower bounds and hardness.} Two recent results clarify how much improvement on Algorithm~\ref{alg:buildtopdowndt} is possible in principle. \citet{koch2023superpolynomial} prove superpolynomial lower bounds for decision tree learning and testing under the uniform distribution: there is no $\mathrm{poly}(s, n, 1/\epsilon)$-time algorithm that produces a decision tree of size $\mathrm{poly}(s)$ approximating the target. This rules out a fully polynomial-size guarantee even with membership queries, so the quasi-polynomial dependence on $s$ in our bound is, up to constants in the exponent, near-tight. The follow-up work of \citet{koch2024superconstant} strengthens this to a \emph{superconstant inapproximability} result: it is NP-hard to find a tree of size at most $s^c$ for any constant $c$, even with membership queries to the target.

\section{Preliminaries}
\label{sec:pre}

In this section, we first describe our learning setting and then define the main concepts and notations used throughout the paper.

\paragraph{Setting.} We assume query access to the target function $f : \{0,1\}^n \to \{\pm1\}$. In particular, we allow the following operations:
\begin{itemize}
    \item \emph{Label Query:} For any input $x \in \{0,1\}^n$, query $f(x)$.
    \item \emph{Random Sample:} Draw a sample $x \sim \mu$, where $\mu$ is the underlying product distribution over $\{0,1\}^n$.
\end{itemize}

\paragraph{Notation.}
We begin by introducing the key concepts and quantities that will be used throughout our analysis. Our algorithm operates over a structure we refer to as a \emph{bare tree} $T^\circ$, which is a decision tree where each internal node queries a single variable, but the leaves remain unlabeled. Once the structure of $T^\circ$ is fixed, we derive its \emph{$f$-completion} by assigning to each leaf the majority label of the target function $f$, restricted to the leaf's region under distribution $\mu$.

For any node $v$ in the tree, we define the \emph{subfunction} $f_v$ as the restriction of $f$ to the region associated with $v$. The \emph{error} of a subfunction, denoted $\mathrm{error}(f_v, \pm1)$, measures the minimum probability of misclassification:
\[
    \mathrm{error}(f_v, \pm1) = \min\left(\Pr[f_v(x) = -1], \Pr[f_v(x) = +1]\right).
\]

The \emph{size} of a tree $T$, denoted $\size(T)$, is its total number of leaves; equivalently, $\size(T)$ equals the number of internal nodes of $T$ plus one. To analyze the structure of the tree, we use both its \emph{depth}---the maximal path length from root to any leaf, denoted $d(T)$---and its \emph{average depth}, defined as:
\begin{align*}
    \Delta(T) =  \sum_{v \in \mathrm{leaves}(T)} p_v \cdot \mathrm{depth}(v).
\end{align*}

where $p_v$ is defined as $\Pr_{x \sim \mu}[x \text{ reaches } v]$, the probability of a random sample reaching $v$. Note that by the above definition for $\Delta(T)$, we also have $\Delta(T) = \sum_{v \in T} p_v$.
At each leaf $v$, we define its \emph{score} as the product of the probability mass reaching it and the most influential coordinate:
\[
    \mathrm{Score}(v) = p_v \cdot \max_i \iinf_i^\mu(f_v).
\]
We define the \emph{total influence} of the function $f$ itself as the sum of influence across all dimensions, meaning
\(
    \iinf(f) = \sum_{i=1}^n \iinf_i(f).
\)
Finally, the \emph{cost} of a bare tree $T^\circ$ is the sum of total influence over all leaves:
\[
    \mathrm{cost}(T^\circ) = \sum_{v \in \mathrm{leaves}(T^\circ)} p_v \cdot \iinf(f_v).
\]

\section{Main Theoretical Results}
\label{sec:main}

In this section, we present our main theoretical contributions. We begin by formally defining the top-down, greedy algorithm we analyze. We then state our primary theorem, which provides a performance guarantee on the size of the decision tree this algorithm constructs for any function on a product space. The remainder of the section is dedicated to the proof of this theorem, which we build through a series of key lemmas that establish the relationships between a tree's approximation error, a potential function we call \enquote{cost}, and the guaranteed progress the algorithm makes at each iteration.

\subsection{The Top-Down Greedy Algorithm}

The algorithm we analyze, \enquote{BuildTopDownDT}, is a formalization of the widely used greedy heuristic for decision tree induction. It begins with a trivial tree (a single leaf) and iteratively expands it. At each step, it identifies the \enquote{best} leaf to split by calculating a score for each leaf. This score is the product of the probability of an input reaching the leaf and the influence of the most influential variable for the subfunction at that leaf. The leaf with the highest score is then replaced by a decision node that queries its most influential variable. This process continues until there exists a labeling of the tree leaves that is an $\epsilon$-approximation of the target function $f$. We call the optimal leaf labeling $f$-completion of $T^\circ$. The procedure is detailed in Algorithm~\ref{alg:buildtopdowndt}, from \cite{TopDownInduction}.

\begin{algorithm}
\caption{Top-Down Heuristic for Decision Tree Induction, from \cite{TopDownInduction}}
\label{alg:buildtopdowndt}
\begin{algorithmic}[1]
\STATE \textbf{function} BuildTopDownDT($f, \epsilon$)
\STATE Initialize $T^\circ$ to be a single-leaf tree.
\WHILE{$f$-completion of $T^\circ$ is not an $\epsilon$-approximation of $f$}
\STATE \textit{// Score all leaves}
\FOR{each leaf $l$ in $T^\circ$}
\STATE Let $x_{i(l)}$ be the most influential variable of the subfunction $f_l$.
\STATE $\Score(l) \gets \Pr_{x\sim \mu}[x \text{ reaches } l] \cdot \iinf_{i(l)}(f_l)$
\ENDFOR
\STATE \textit{// Split the best leaf}
\STATE Let $l^*$ be the leaf with the highest score.
\STATE Grow $T^\circ$ by replacing leaf $l^*$ with a query to $x_{i(l^*)}$.
\ENDWHILE
\STATE \textbf{return} The $f$-completion of $T^\circ$.
\end{algorithmic}
\end{algorithm}

\subsection{Performance Guarantee for Product Distributions}

Our main result in Theorem~\ref{theorem:main_result} demonstrates that this natural greedy heuristic produces a decision tree of quasi-polynomial size in the balanced regime, generalizing the prior analysis from the uniform distribution to any product distribution. We note that some quasi-polynomial dependence on the optimal tree size $s$ is unavoidable in general: the superpolynomial lower bounds of \citet{koch2023superpolynomial} rule out a fully polynomial-size guarantee for decision tree learning, even with membership queries.

Moreover, for full binary trees of size $s$, we have $D_{opt}=\Delta_{opt}=\log s$; therefore, for full decision trees this bound is $\left(\frac{e}{\epsilon}\right)^{\log^2 s}= \exp\left(\log^2(s)\log(1/\epsilon)\right)=s^{\log(s)\log(1/\epsilon)}$. This bound is slightly better than \cite{TopDownInduction} and holds for any product-space input distribution, generalizing the uniform distribution.

\subsection{Proof of the Main Theorem}

To prove Theorem \ref{theorem:main_result}, our strategy is to analyze the behavior of a potential function, the $\cost$ of the bare tree $T^\circ$, as the algorithm runs. We show that this $\cost$ is an upper bound on the tree's error (Lemma~\ref{lemma:error_and_cost}) and that it starts at a bounded value (Lemma~\ref{lemma:inf_is_less_than_var_d}). We then prove that at each step of the algorithm, the $\cost$ decreases by the influence of the selected leaf (Lemma~\ref{lemma:cost_drops_as_score}) which is a significant amount (Lemma~\ref{lemma:lower_bound_on_score_general}, and Lemma~\ref{lemma:score_lowerbound_high_cost}). By bounding the number of steps required for the $\cost$ to fall below $\epsilon$, we can bound the final size of the tree (Lemma~\ref{lemma:phase1_bound}, and Lemma~\ref{lemma:phase2_bound}).

The first lemma establishes that the $\cost$ of a tree is an upper bound on its approximation error. This ensures that if we drive the $\cost$ down to $\epsilon$, the algorithm terminates.

% The proof proceeds in two phases. In the first phase, when the $\cost$ is high, we show it decreases geometrically. In the second phase, when the $\cost$ is low but the error is still above $\epsilon$, we show it decreases arithmetically. We first establish the necessary lemmas.

\newcommand{\lemErrorAndCostBody}{%
For any bare tree $T^\circ$ and function $f$, the error of its $f$-completion is bounded by its cost:
\[ \Aerror(T^\circ, f) \le \cost(T^\circ) \]
}
\begin{lemma}[Error is Bounded by Cost]
\label{lemma:error_and_cost}
\lemErrorAndCostBody
\end{lemma}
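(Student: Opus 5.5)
The plan is to reduce the statement to a per-leaf inequality and then sum over leaves. The error of the $f$-completion decomposes leaf by leaf:
\[
\Aerror(T^\circ,f)=\sum_{v\in\leaves{T^\circ}} p_v\cdot \Aerror(f_v,\pm1),
\]
because each leaf is labeled with the majority value of $f_v$ under $\mu$. The contribution of leaf $v$ is therefore exactly $p_v \min(\Pr[f_v=1],\Pr[f_v=-1])$. The cost has the same form, with $\iinf(f_v)$ in place of $\Aerror(f_v,\pm1)$. So it suffices to show, for every Boolean function $g=f_v$ under the restricted product distribution (which is again a product distribution on the free coordinates),
\[
\Aerror(g,\pm1)\le \iinf(g).
\]

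This per-leaf inequality is a Poincaré-type statement, and I will prove it in two steps.

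First, relate error to variance. Write $q=\Pr[g=1]$. Then $\Var(g)=4q(1-q)$ for a $\pm1$-valued $g$. Since $\min(q,1-q)\le 2q(1-q)$, we get
\[
\Aerror(g,\pm1)\le \tfrac12\Var(g).
\]

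Second, bound variance by total influence using the Efron–Stein inequality for product measures. It states $\Var(g)\le \sum_i \mathbb{E}[\Var_{x_i}(g)]$. For each $i$, let $x'$ be $x$ with coordinate $i$ re-randomized. Then
\[
\mathbb{E}[\Var_{x_i}(g)]=\tfrac12\mathbb{E}[(g(x)-g(x'))^2]=\tfrac12\cdot 4\Pr[g(x)\ne g(x')]=2\,\iinf_i(g).
\]
This gives $\Var(g)\le 2\,\iinf(g)$, and hence $\Aerror(g,\pm1)\le \iinf(g)$.

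If I want to avoid Efron–Stein, I can instead use a hybrid argument. Draw $x,y\sim\mu$ independently and replace the coordinates of $x$ by those of $y$ one at a time. A union bound over the $n$ steps gives $\Pr[g(x)\ne g(y)]\le \sum_i \iinf_i(g)$, since each intermediate pair is distributed exactly as $(z,z^{(i)})$. On the other hand, $\Pr[g(x)\ne g(y)]=2q(1-q)\ge \min(q,1-q)$. This route is fully elementary and matches the re-randomization definition of influence directly, so I expect to use it.

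Finally, multiply the per-leaf bound by $p_v$ and sum over leaves to obtain $\Aerror(T^\circ,f)\le\cost(T^\circ)$.

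The main point of care is confirming that the restriction $f_v$ is taken under the conditional distribution, which is still a product measure. This holds because the conditioning at $v$ fixes coordinates rather than correlating them. With that in place, influences of $f_v$ are computed with the same $\mu_i$ on the free coordinates, and the hybrid argument goes through unchanged. The remaining steps are routine.
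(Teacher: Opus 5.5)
Your proof is correct, and it takes a different route from the paper. Both arguments reduce to the per-leaf inequality $\Aerror(g,\pm1)\le\iinf(g)$, so the difference lies in how that inequality is proved.

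The paper argues by induction on the number of variables. It checks the one-variable case directly, then splits on the last coordinate. It writes $\iinf(g)$ as the weighted average of the total influences of the restrictions $g_0,g_1$ plus the influence of the split coordinate. It lower-bounds that last term by the gap between the biases of $g_0$ and $g_1$, and closes with algebra based on the majority condition.

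You instead obtain, in one step and with no dependence on the dimension, the stronger chain $\Aerror(g,\pm1)\le 2q(1-q)=\Pr[g(x)\ne g(y)]\le\iinf(g)$, which is equivalent to $\Var(g)\le 2\,\iinf(g)$. You get it either from Efron--Stein or from the coordinate-by-coordinate hybrid with a union bound.

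Your approach has three concrete advantages.
\begin{itemize}
\item There is no case analysis. The paper's ``w.l.o.g.\ $x\ge\frac12\ge y$'' does not cover the easy case where both restrictions share a majority label.
\item The constants come out exactly right under the paper's re-randomization definition of influence. There, a non-constant one-bit function has influence $2\mu_1(1-\mu_1)$, not the $4\mu_1(1-\mu_1)$ used in the paper's base case. Likewise, the split coordinate contributes $2\mu_n(1-\mu_n)\Pr[g_0\ne g_1]$, not $4\mu_n(1-\mu_n)(x-y)$. The induction still closes with the corrected values, since the majority condition gives $2x-1\le 2(1-\mu_n)(x-y)$ directly, but your argument avoids this bookkeeping entirely.
\item Your argument shows the lemma is tight: a dictator under the uniform distribution has error and total influence both equal to $\frac12$.
\end{itemize}

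In exchange, the paper's route reuses the same split decomposition of total influence that drives Lemma~\ref{lemma:cost_drops_as_score}, which keeps that part of its analysis uniform.
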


% \begin{proof}[Proof Sketch of Lemma~\ref{lemma:error_and_cost}, full proof is in Appendix~\ref{sec:proofs}]
% The proof shows that the lemma holds for each leaf subfunction, i.e., $\Aerror(f_v, \pm 1) \le \iinf(f_v)$. The argument proceeds by induction on the number of variables $n$.

% Base Case ($n=1$): For a function of a single variable, the inequality is verified by direct computation, comparing the error (the probability of the minority outcome) with the total influence.

% Inductive Step: Assuming the inequality holds for all functions on $n-1$ variables, we consider a function $f$ on $n$ variables. We decompose $f$ along a variable (say, $x_n$) into two subfunctions, $f_0$ and $f_1$, which depend on $n-1$ variables. The inductive hypothesis is applied to bound the errors of $f_0$ and $f_1$ by their respective costs (total influences). The total influence of $f$ is then expressed in terms of the influences of $f_0$ and $f_1$ and influence of removed dimension, and through algebraic manipulation of these relations, the inequality is shown to hold for $f$. This completes the inductive step.
% \end{proof}
\begin{proof}[Proof sketch of Lemma~\ref{lemma:error_and_cost}, full proof is in Appendix~\ref{app:proof:error_and_cost}]
It is enough to prove the bound for each leaf subfunction $g$, namely
\[
\Aerror(g,\pm 1)\le \iinf(g),
\]
and then sum over leaves.

We show this inequality by induction on the number of variables of $g$. For $n=1$ it is a direct check: the error is the minority mass, while $\iinf(g)$ is exactly the probability that flipping the lone bit changes the value. For $n>1$, fix one input bit (say the last bit). Let $g_0$ be the function you get by setting that bit to $0$, and let $g_1$ be the function you get by setting that bit to $1$. By the inductive hypothesis,
$\Aerror(g_b,\pm 1)\le \iinf(g_b)$ for $b\in\{0,1\}$.
Using the standard decomposition of total influence under restriction and the corresponding decomposition of the error of $g$ in terms of the errors of $g_0,g_1$ and their disagreement, a short algebraic rearrangement yields
$\Aerror(g,\pm 1)\le \iinf(g)$.
Summing this bound over leaves gives $\Aerror(T^\circ,f)\le \cost(T^\circ)$.
\end{proof}

The next lemma provides an upper bound for the initial value of the cost function. 
\newcommand{\lemInfVarDepthBody}{%
For any function $f:\{0,1\}^n\to \{\pm 1\}$ represented by a decision tree $T$ with depth $D(T)$, we have:
\[ \iinf(f)\leq D(T) \cdot \Var(f)\quad  \text{and} \quad \iinf(f)\leq \Delta(T)\]
}
\begin{lemma}[Total Influence vs. Variance and Depth]
\label{lemma:inf_is_less_than_var_d}
\lemInfVarDepthBody
\end{lemma}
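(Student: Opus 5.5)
The plan is to prove both inequalities from one observation about re-randomizing a coordinate that $T$ never queries. Throughout, I will assume without loss of generality that no root-to-leaf path of $T$ queries the same variable twice. A repeated query is redundant and can be pruned, and pruning only decreases $D(T)$ and $\Delta(T)$. For an input $x$, write $Q(x)\subseteq[n]$ for the set of variables queried along the path that $x$ follows in $T$.

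\emph{Key observation.} If $i\notin Q(x)$, then $x$ and $x^{(i)}$ agree on every coordinate queried along $x$'s path. Hence they follow the same path and $f(x)=f(x^{(i)})$. Moreover, the event $\{i\in Q(x)\}$ does not depend on $x_i$: whether the path reaches a node labelled $x_i$ is decided by queries above that node, and none of them queries $x_i$. Consequently $\{i\in Q(x)\}=\{i\in Q(x^{(i)})\}$.

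\emph{Second inequality.} By the observation, $\iinf_i(f)=\Pr[f(x)\neq f(x^{(i)})]\le \Pr[i\in Q(x)]$. Summing over $i$ gives
\[
\iinf(f)\le \mathbb{E}_x\bigl[|Q(x)|\bigr]=\sum_{v\in\leaves{T}} p_v\,\mathrm{depth}(v)=\Delta(T),
\]
using that the path to a leaf of depth $\mathrm{depth}(v)$ queries exactly $\mathrm{depth}(v)$ distinct variables.

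\emph{First inequality.} Let $y\sim\mu$ be independent of $x$ and of the re-randomized bit. For a $\pm1$-valued $f$ with $q=\Pr[f=1]$ we have $\Var(f)=4q(1-q)$ and $\Pr[f(x)\ne f(y)]=2q(1-q)=\Var(f)/2$. If $f(x)\ne f(x^{(i)})$, then one of $f(x)\ne f(y)$ or $f(x^{(i)})\ne f(y)$ must hold. Combining this with the key observation gives the pointwise bound
\[
\mathbf{1}[f(x)\ne f(x^{(i)})]\le \mathbf{1}[i\in Q(x)]\,\mathbf{1}[f(x)\ne f(y)]+\mathbf{1}[i\in Q(x^{(i)})]\,\mathbf{1}[f(x^{(i)})\ne f(y)].
\]
I then sum over $i$ and take expectations. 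The first term gives $\mathbb{E}\bigl[|Q(x)|\,\mathbf{1}[f(x)\ne f(y)]\bigr]\le D(T)\Pr[f(x)\ne f(y)]=D(T)\Var(f)/2$. For the second term, note that $x^{(i)}$ is distributed as $\mu$ and is independent of $y$. So the $i$-th summand equals $\mathbb{E}\bigl[\mathbf{1}[i\in Q(z)]\,\mathbf{1}[f(z)\ne f(y)]\bigr]$ with $z\sim\mu$ independent of $y$, and summing over $i$ again gives at most $D(T)\Var(f)/2$. Together, $\iinf(f)\le D(T)\Var(f)$.

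The main point needing care is the step where I multiply by the indicator $\mathbf{1}[i\in Q(x^{(i)})]$ in the second term. This is legitimate only because the event $i\in Q(\cdot)$ is invariant under re-randomizing coordinate $i$, which is where the no-repeated-variables normalization is used. Everything else is linearity of expectation and the bound $|Q(\cdot)|\le D(T)$.
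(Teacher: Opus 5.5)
Your proof is correct, but it takes a different route from the paper. The paper argues node by node. It writes $\iinf(f)=\sum_{v}p_v\,\iinf_{i(v)}(f_v)$ over internal nodes, which is the same per-node decomposition that drives Lemma~\ref{lemma:cost_drops_as_score}. It then bounds each term by $\Var(f_v)$ via Lemma~\ref{lemma:variance_bounds}, groups nodes by depth, and applies the law of total variance on each of the $D(T)$ levels. The $\Delta(T)$ bound follows from $\iinf_{i(v)}(f_v)\le 1$ and $\sum_{v}p_v=\Delta(T)$.

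Your argument is a pathwise coupling instead. It never restricts $f$ to subcubes and needs neither the variance decomposition nor Lemma~\ref{lemma:variance_bounds}. It uses only the query set $Q(x)$, the independent copy $y$, and the fact that each $x^{(i)}$ is $\mu$-distributed and independent of $y$. This makes it self-contained and yields the intermediate estimate $\iinf(f)\le 2\,\mathbb{E}\bigl[|Q(x)|\,\mathbf{1}[f(x)\ne f(y)]\bigr]$, which weights inputs by their own path lengths rather than the worst-case depth. The paper's version, in turn, reuses machinery it needs anyway for the cost analysis.

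One small correction to your last paragraph: the no-repeated-variables normalization is harmless, but nothing actually hinges on it. The invariance $\{i\in Q(x)\}=\{i\in Q(x^{(i)})\}$ holds for every tree. To see this, take the first node on $x$'s path that queries $x_i$; all queries above it are on other coordinates, so $x^{(i)}$ reaches it too. Also, $|Q(x)|$ never exceeds the length of $x$'s path. So $|Q(x)|\le D(T)$ and $\mathbb{E}|Q(x)|\le\Delta(T)$ hold without pruning.
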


The proof of the previous lemma is in Appendix~\ref{app:lemma:inf_is_less_than_var_d}. 
The next lemma is the engine of our analysis, showing that each split reduces the $\cost$ by precisely the $\Score$ of the leaf that was split.

\newcommand{\lemCostDropsAsScoreBody}{%
Let $T^\circ$ be a bare tree, and let $T'^\circ$ be the tree obtained by splitting a leaf $v \in \leaves{T^\circ}$ on its most influential variable, $i(v)$. The cost of the new tree is:
\[ \cost(T'^\circ) = \cost(T^\circ) - \Score(v) \]
}
\begin{lemma}[Cost Reduction per Step]
\label{lemma:cost_drops_as_score}
\lemCostDropsAsScoreBody
\end{lemma}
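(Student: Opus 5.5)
The plan is a direct computation. Because $\cost$ is a sum over leaves, and splitting $v$ replaces the single term $p_v\,\iinf(f_v)$ by the two children's terms, all other leaves cancel. It therefore suffices to prove the local identity
\[
p_{v_0}\iinf(f_{v_0}) + p_{v_1}\iinf(f_{v_1}) = p_v\,\iinf(f_v) - p_v\,\iinf_{i}(f_v),
\]
where $i=i(v)$ and $v_b$ is the child with $x_i=b$. Under the product distribution we have $p_{v_1}=p_v\,p_i$ and $p_{v_0}=p_v(1-p_i)$. Also $f_{v_b}$ is the restriction $(f_v)_{x_i=b}$, and it is distributed according to $\mu$ on the remaining coordinates. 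So after dividing by $p_v$ the identity becomes
\[
(1-p_i)\,\iinf(g_0)+p_i\,\iinf(g_1)=\iinf(g)-\iinf_i(g),
\]
with $g=f_v$ and $g_b=g|_{x_i=b}$.

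The key step is to show, coordinate by coordinate, that for every $j\neq i$
\[
\iinf_j(g)=(1-p_i)\,\iinf_j(g_0)+p_i\,\iinf_j(g_1).
\]
This follows by conditioning on $x_i$ in the definition $\iinf_j(g)=\Pr_{x}[g(x)\neq g(x^{(j)})]$. Re-randomizing coordinate $j$ leaves $x_i$ untouched, and $x_i$ is independent of all other coordinates because $\mu$ is a product distribution. Hence, conditioned on $x_i=b$, the pair $(x,x^{(j)})$ restricted to the other coordinates has exactly the law used in defining $\iinf_j(g_b)$. Taking the expectation over $b$ gives the identity.

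For $j=i$ we have $\iinf_i(g_b)=0$, since $g_b$ does not depend on $x_i$. Summing over all $j$ therefore gives
\[
(1-p_i)\iinf(g_0)+p_i\iinf(g_1)=\sum_{j\ne i}\iinf_j(g)=\iinf(g)-\iinf_i(g).
\]
Multiplying back by $p_v$ yields the cost drop $p_v\,\iinf_{i(v)}(f_v)=\Score(v)$, since $i(v)$ is the maximizing coordinate.

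The only delicate point, which is minor, is checking that the conditioning argument really uses independence across coordinates. That independence is exactly the product-distribution hypothesis. The identity also holds for splitting on any variable, not only the most influential one; the choice of $i(v)$ matters only in identifying the drop with $\Score(v)$.
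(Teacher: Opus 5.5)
Your proof is correct and follows the paper's argument essentially step for step. Like the paper, you isolate the split leaf's term, write $p_{v_b}$ via the product structure, express each $\iinf_k(f_v)$ with $k\neq i(v)$ as the weighted average of the children's influences, and use that $\iinf_{i(v)}$ vanishes on both children; your explicit conditioning-plus-independence justification of the coordinate-wise identity, which the paper only asserts, is a useful addition.
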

\begin{proof}[Proof sketch of Lemma~\ref{lemma:cost_drops_as_score}, full proof is in Appendix~\ref{app:proof:cost_drops_as_score}]
The cost of a tree is the sum of costs contributed by its leaves. When we split leaf $v$ into new leaves $v_0$ and $v_1$, the leaves in $\leaves{T^\circ} \setminus \{v\}$ are unaffected. We then show that when a node splits into two nodes by splitting on variable $i$, the sum of total influence of new nodes is equal to the old node's total influence minus $\iinf_i(f_v)$.
\end{proof}

We now provide two different lower bounds on the score, corresponding to the two phases of our analysis.

\newcommand{\lemScoreLowerBoundGeneralBody}{%
Let $T^\circ$ be the bare tree after $j-1$ steps of Algorithm~\ref{alg:buildtopdowndt}. If the algorithm has not terminated, then the leaf $l^*$ chosen by the algorithm has a score of at least:
\[ \Score(l^*) \ge \frac{2\epsilon}{j \cdot \Delta_{opt}} \]
}
\begin{lemma}[Score Lower Bound When Error is High]
\label{lemma:lower_bound_on_score_general}
\lemScoreLowerBoundGeneralBody
\end{lemma}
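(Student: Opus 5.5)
The plan is to follow the Blanc et al. argument, using the max-influence inequality of \citet{EveryDecisionTree}: for any function $g$ computed by a decision tree $T$ under a product distribution, $\max_i \iinf_i(g) \ge \Var(g)/\Delta(T)$. Here $\Var(g)$ is the variance of $g$ as a $\pm1$ function, $\Var(g) = 4\Pr[g=1]\Pr[g=-1]$.

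\textbf{Step 1: per-leaf score bound.} For each leaf $l$ of $T^\circ$, the subfunction $f_l$ is computed by the restriction of $T_{opt}$ to the partial assignment defining $l$. Call this restricted tree $T_{opt}^{(l)}$. Since $\Var(f_l) \ge 2\,\Aerror(f_l,\pm1)$, the inequality above gives
\[
\Score(l) \ge p_l\,\frac{2\,\Aerror(f_l,\pm1)}{\Delta(T_{opt}^{(l)})}.
\]

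\textbf{Step 2: averaging over leaves.} Let $E_l = p_l\,\Aerror(f_l,\pm1)$. Because the algorithm has not terminated, $\sum_l E_l = \Aerror(T^\circ,f) > \epsilon$. I also need control of the restricted average depths. Since $T^\circ$ is a tree over a product distribution, averaging the restricted trees over the leaves of $T^\circ$ gives
\[
\sum_l p_l\,\Delta(T_{opt}^{(l)}) \le \Delta_{opt},
\]
because restricting a variable only shortens paths, in expectation, for a product measure. Set $w_l = p_l\,\Delta(T_{opt}^{(l)})$, so $\sum_l w_l \le \Delta_{opt}$. Then
\[
\max_l \frac{E_l}{w_l} \ge \frac{\sum_l E_l}{\sum_l w_l} \ge \frac{\epsilon}{\Delta_{opt}},
\]
and the chosen leaf has $\Score(l^*) \ge 2\epsilon/\Delta_{opt}$. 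After $j-1$ steps there are exactly $j$ leaves, so the weaker claimed bound $2\epsilon/(j\,\Delta_{opt})$ also follows. If the averaging step fails, I fall back on the cruder pigeonhole: some leaf has $E_l \ge \epsilon/j$ and $\Delta(T_{opt}^{(l)}) \le \Delta_{opt}/p_l$. That yields $\Score(l) \ge 2\epsilon/(j\,\Delta_{opt})$ directly, which is exactly the stated form.

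\textbf{Main obstacle.} The delicate step is justifying $\sum_l p_l\,\Delta(T_{opt}^{(l)}) \le \Delta_{opt}$. Equivalently, in the fallback, $p_l\,\Delta(T_{opt}^{(l)}) \le \Delta_{opt}$, which is the one actually needed. I would prove it using the identity $\Delta(T) = \sum_{v\in T} p_v$, summed over internal nodes. Fix $l$ and a node $u$ of $T_{opt}^{(l)}$, that is, a node of $T_{opt}$ consistent with $l$'s restriction. Its probability under the restricted measure is $\Pr[x \text{ reaches } u \mid x \text{ reaches } l]$. By the product structure, this probability, multiplied by $p_l$, is at most $\Pr[x \text{ reaches } u]$: conditioning on $l$ only fixes coordinates, and the coordinates $u$ queries that $l$ also fixes contribute factors of $1$ instead of their probabilities, but those factors are matched by $p_l$'s own factors, which are at most $1$. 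Summing over $u$ gives $p_l\,\Delta(T_{opt}^{(l)}) \le \Delta_{opt}$. Nodes of $T_{opt}$ made unreachable by the restriction contribute nothing, so only the inequality direction matters.

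For the summed version, for each $u$ in $T_{opt}$, summing $p_l \Pr[u \mid l]$ over leaves $l$ equals $\Pr[u]$ restricted to consistent nodes. This uses independence, since the events "reach $l$" partition the space. That gives the averaging version as well, and I would present that cleaner form.
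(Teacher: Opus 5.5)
Your Step~1 is fine, and your inequality $p_l\,\Delta(T_{opt}^{(l)})\le\Delta_{opt}$ (and its summed form) is correct. The gap is that neither the averaging step nor the fallback yields the claimed bound, because a factor $p_l$ is lost.

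Step~1 reads $\Score(l)\ge 2p_l\Aerror(f_l,\pm1)/\Delta(T_{opt}^{(l)})=2E_l/\Delta(T_{opt}^{(l)})=2p_l\cdot E_l/w_l$. By contrast, $E_l/w_l=\Aerror(f_l,\pm1)/\Delta(T_{opt}^{(l)})$ contains no $p_l$. So $\max_l E_l/w_l\ge\epsilon/\Delta_{opt}$ gives no lower bound on any score.

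The fallback has the same slip. From $E_l\ge\epsilon/j$ and $\Delta(T_{opt}^{(l)})\le\Delta_{opt}/p_l$ you only get $\Score(l)\ge 2p_lE_l/\Delta_{opt}\ge 2\epsilon p_l/(j\Delta_{opt})$. To get the stated form from Step~1 you need the \emph{unweighted} bound $\Delta(T_{opt}^{(l)})\le\Delta_{opt}$ for the pigeonhole leaf. That is exactly what the paper's proof does: it picks a leaf with $p_l\Var(f_l)\ge 2\epsilon/j$, applies Lemma~\ref{EveryDecisionTreeCorollary} to $f_l$, and then uses $\Delta(f_l)\le\Delta_{opt}$ without further justification.

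You were right to distrust that step. Restriction can increase the conditional average depth, and the stated bound itself (hence, a fortiori, your $j$-free version) fails on a tree the algorithm actually builds. Let $\Pr[x_1=1]=1/m$, let $z_1,\dots,z_m$ be uniform with $m$ odd, and let $f=-1$ when $x_1=0$ and $f=\mathrm{Maj}_m(z)$ when $x_1=1$.
\begin{itemize}
\item Querying $x_1$ and then computing majority gives $\Delta_{opt}\le 1+\frac{1}{m}\cdot m=2$.
\item Since $\iinf_1(f)=\frac{1}{m}(1-\frac{1}{m})$ exceeds every $\iinf_{z_i}(f)\le\frac{1}{4m}$, the first split is on $x_1$.
\item After that split, $j=2$ and the error is $\frac{1}{2m}>\epsilon:=\frac{1}{4m}$.
\item The largest score is $\frac{1}{m}\iinf_{z_i}(\mathrm{Maj}_m)=\Theta(m^{-3/2})$, while $2\epsilon/(j\Delta_{opt})\ge\frac{1}{8m}$.
\end{itemize}
So the fallback cannot be repaired to give the stated bound.

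What your ingredients do prove is the weaker $\Score(l^*)\ge 2\epsilon/(jD_{opt})$. Take the pigeonhole leaf and use $\Delta(T_{opt}^{(l)})\le D(T_{opt}^{(l)})\le D_{opt}$ in Step~1 in place of the $p_l$-weighted bound.
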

\begin{proof}[Proof of Lemma~\ref{lemma:lower_bound_on_score_general}]
Since the total error is greater than $\epsilon$ and there are $j$ leaves, by an averaging argument, there must exist at least one leaf $l$ where its contribution to the error is at least $\epsilon/j$. That is, $p_l \cdot \Aerror(f_l, \pm 1) \ge \frac{\epsilon}{j}$. Since $\Var(f_l) \ge 2 \cdot \Aerror(f_l, \pm 1)$ (Lemma~\ref{lemma:variance_bounds}), we have $p_l\cdot \Var(f_l) \ge \frac{2\epsilon}{j}$.
By Lemma~\ref{EveryDecisionTreeCorollary} applied to $f_l$, we have
\[ \Score(l) = p_l \cdot \max_i \iinf_i(f_l) \ge p_l \cdot \frac{\Var(f_l)}{\Delta(f_l)}. \]
Since the algorithm chooses the leaf $l^*$ with the maximum score, we have $\Score(l^*) \ge \Score(l)$. Therefore,
\begin{align*}
    \Score(l^*) &\ge \Score(l) \ge p_l \Var(f_l)\cdot \frac{1}{\Delta(f_l)} \\
    &\ge \frac{2\epsilon}{j \cdot \Delta_{opt}}.
\end{align*}
\end{proof}

\newcommand{\lemScoreLowerBoundHighCostBody}{%
Let $T^\circ$ be a bare tree with $j$ leaves. The leaf $l^*$ chosen by the Algorithm~\ref{alg:buildtopdowndt} has a score of at least:
\[ \Score(l^*) \ge \frac{\cost(T^\circ)}{j \cdot D_{opt} \Delta_{opt}} \]
}
\begin{lemma}[Score Lower Bound When Cost is High]
\label{lemma:score_lowerbound_high_cost}
\lemScoreLowerBoundHighCostBody
\end{lemma}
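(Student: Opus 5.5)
Since the cost is the sum of leaf contributions, I will locate a single leaf that carries a large share of it and show that this leaf already has a large score. The chosen leaf $l^*$ maximizes the score, so $\Score(l^*)$ is at least the score of any leaf, and a lower bound on one leaf's score is enough.

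First, by definition $\cost(T^\circ)=\sum_{v\in\leaves{T^\circ}} p_v\,\iinf(f_v)$, and there are $j$ leaves. Averaging gives a leaf $l$ with
\[ p_l\,\iinf(f_l)\ge \frac{\cost(T^\circ)}{j}. \]

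Second, I convert total influence at $l$ into variance. Every subfunction $f_l$ is a restriction of $f$, so it is computed by the restriction of $T_{opt}$ to the path leading to $l$. That restricted tree has depth at most $D_{opt}$. Lemma~\ref{lemma:inf_is_less_than_var_d} then gives
\[ \iinf(f_l)\le D_{opt}\,\Var(f_l), \]
and hence
\[ p_l\,\Var(f_l)\ge \frac{\cost(T^\circ)}{j\,D_{opt}}. \]

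Third, I convert variance into maximum influence. This uses the max-influence inequality $\max_i \iinf_i(f_l)\ge \Var(f_l)/\Delta(f_l)$ (Lemma~\ref{EveryDecisionTreeCorollary}), applied exactly as in the proof of Lemma~\ref{lemma:lower_bound_on_score_general}, together with $\Delta(f_l)\le \Delta_{opt}$. Chaining the steps,
\[ \Score(l^*)\ge \Score(l)=p_l\max_i\iinf_i(f_l)\ge \frac{p_l\Var(f_l)}{\Delta_{opt}}\ge \frac{\cost(T^\circ)}{j\,D_{opt}\,\Delta_{opt}}. \]

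The main obstacle is justifying that restrictions inherit the depth parameters of $T_{opt}$. For maximum depth this is immediate: the restricted tree is obtained by fixing variables in $T_{opt}$ and pruning, which cannot increase depth.

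For average depth the bound is subtler. Under the conditional distribution at $l$, the restricted tree's average depth need not be bounded by $\Delta_{opt}$ in general, because the path to $l$ may condition onto a deep part of $T_{opt}$. I would handle this the same way the paper's Lemma~\ref{lemma:lower_bound_on_score_general} does, reading $\Delta(f_l)\le\Delta_{opt}$ with $\Delta_{opt}$ denoting the bound on the average depth of optimal trees for the relevant subfunctions.

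If that reading is unavailable, the first thing I would try is to weight the averaging step by influence rather than uniformly over leaves. Alternatively, I would fall back on the first step's freedom: pick the leaf maximizing $p_l\Var(f_l)/\Delta(f_l)$ and compare it against $\sum_l p_l\Var(f_l)$.
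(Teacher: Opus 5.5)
Your argument is essentially the paper's. The paper bounds $\Score(l^*)$ below by the average score over the $j$ leaves and then applies $\max_i \iinf_i(f_l)\ge \Var(f_l)/\Delta(f_l)$ and $\Var(f_l)\ge \iinf(f_l)/D(f_l)$ leaf by leaf, whereas you first pick the leaf with the largest cost share, which is an interchangeable way of doing the same averaging. Your concern about $\Delta(f_l)\le\Delta_{opt}$ is legitimate: if $T_{opt}$ sends a branch of mass $1/d$ into a parity on $d$ fresh variables, then $\Delta_{opt}=2$, yet a bare tree that splits on that branching variable has a leaf with $\Delta(f_l)=d$. However, the paper's proof also asserts $D(f_l)\le D_{opt}$ and $\Delta(f_l)\le\Delta_{opt}$ without justification, so adopting that reading puts your proof on exactly the same footing as the paper's.
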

\begin{proof}[Proof of Lemma~\ref{lemma:score_lowerbound_high_cost}]
The algorithm chooses $l^*$ to maximize the score. The maximum is at least the average score.
\[ \Score(l^*) \ge \frac{1}{j} \sum_{l \in \leaves{T^\circ}} \Score(l) = \frac{1}{j} \sum_{l} p_l \cdot \max_i \iinf_i(f_l) \]
By Lemma~\ref{EveryDecisionTreeCorollary}, we have $\max_i \iinf_i(f_l) \ge \Var(f_l)/\Delta(f_l)$. Moreover, Lemma~\ref{lemma:inf_is_less_than_var_d} gives $D(f_l)\,\Var(f_l) \ge \iinf(f_l)$, and hence $\Var(f_l) \ge \iinf(f_l)/D(f_l)$.
\[ \Score(l^*) \ge \frac{1}{j} \sum_{l} p_l \cdot \frac{\Var(f_l)}{\Delta(f_l)} \ge \frac{1}{j} \sum_{l} p_l \cdot \frac{\iinf(f_l)}{D(f_l) \Delta(f_l)} \]
Since $D(f_l) \le D_{opt}$ and $\Delta(f_l) \le \Delta_{opt}$ for any $f_l$:
\[ \Score(l^*) \ge \frac{1}{j D_{opt} \Delta_{opt}} \sum_{l} p_l \iinf(f_l) = \frac{\cost(T^\circ)}{j D_{opt} \Delta_{opt}} \]
\end{proof}

We now combine these lemmas to prove Theorem \ref{theorem:main_result}. We analyze the algorithm's progress in two phases: first, when the cost is high (Lemma~\ref{lemma:phase1_bound}), and second, when the cost is low but the error remains above $\epsilon$ (Lemma~\ref{lemma:phase2_bound}) with proofs provided in Appendix~\ref{app:lemma:phase1_bound}, and Appendix~\ref{app:lemma:phase2_bound} respectively. 

\newcommand{\lemPhaseOneBoundBody}{%
Let $J_1$ be the number of steps required for the cost $C_j$ to drop from its initial value to at most $\epsilon D_{opt}$ in Algorithm~\ref{alg:buildtopdowndt}. Then:
\[ J_1 \le \max\!\left(\left(\frac{\Delta_{opt}}{\epsilon D_{opt}}\right)^{D_{opt} \Delta_{opt}}\!,\; 1\right) \]
}
\begin{lemma}[Phase 1: High Cost Reduction]
\label{lemma:phase1_bound}
\lemPhaseOneBoundBody
\end{lemma}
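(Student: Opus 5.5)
We track the cost $C_j := \cost(T^\circ_j)$, where $T^\circ_j$ is the bare tree with $j$ leaves (after $j-1$ splits), and turn the per-step decrease into a multiplicative recursion. Combining Lemma~\ref{lemma:cost_drops_as_score} with Lemma~\ref{lemma:score_lowerbound_high_cost}, each step satisfies
\[
C_{j+1} = C_j - \Score(l^*) \le C_j\left(1 - \frac{1}{j\, D_{opt}\Delta_{opt}}\right).
\]
Iterating from $j=1$ to $J$ gives
\[
C_{J+1} \le C_1 \prod_{j=1}^{J}\left(1 - \frac{1}{j D_{opt}\Delta_{opt}}\right) \le C_1 \exp\!\left(-\frac{1}{D_{opt}\Delta_{opt}}\sum_{j=1}^{J}\frac1j\right) \le C_1 \exp\!\left(-\frac{\ln(J+1)}{D_{opt}\Delta_{opt}}\right),
\]
using $1-x\le e^{-x}$ and $H_J \ge \ln(J+1)$. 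Hence $C_{J+1} \le C_1 (J+1)^{-1/(D_{opt}\Delta_{opt})}$.

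Next we bound the starting cost. The initial tree is a single leaf, so $C_1 = \iinf(f)$, and Lemma~\ref{lemma:inf_is_less_than_var_d} applied to $T_{opt}$ gives $C_1 \le \Delta_{opt}$. The weaker bound $D_{opt}\Var(f)\le D_{opt}$ also holds, but $\Delta_{opt}$ is the one matching the target expression.

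To finish, we want $C_{J+1}\le \epsilon D_{opt}$, and by the above it suffices that
\[
\Delta_{opt}\,(J+1)^{-1/(D_{opt}\Delta_{opt})} \le \epsilon D_{opt}, \qquad\text{i.e.}\qquad J+1 \ge \left(\frac{\Delta_{opt}}{\epsilon D_{opt}}\right)^{D_{opt}\Delta_{opt}}.
\]
If $\Delta_{opt}\le \epsilon D_{opt}$, the initial cost is already at most $\epsilon D_{opt}$, and $J_1 \le 1$ covers this case (this is the role of the $\max(\cdot,1)$). Otherwise, $J_1$ is at most the least $J$ with $J+1\ge (\Delta_{opt}/(\epsilon D_{opt}))^{D_{opt}\Delta_{opt}}$, which gives the stated bound.

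The main obstacle is bookkeeping rather than any new idea. We must index leaves against steps correctly, since the tree has $j$ leaves when it undergoes its $j$-th split. We must also check that Lemma~\ref{lemma:score_lowerbound_high_cost} applies throughout Phase 1: it needs no termination hypothesis, and if the algorithm stops early, Phase 1 only ends sooner. The argument also uses that restrictions $f_l$ inherit $D(f_l)\le D_{opt}$ and $\Delta(f_l)\le\Delta_{opt}$, which is already built into Lemma~\ref{lemma:score_lowerbound_high_cost}.
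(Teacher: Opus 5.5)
Your proof is correct and is essentially the paper's argument: the same multiplicative recursion from Lemmas~\ref{lemma:cost_drops_as_score} and~\ref{lemma:score_lowerbound_high_cost}, the same exponential and harmonic-sum estimate, and the same initial bound $C_1=\iinf(f)\le\Delta_{opt}$. Your sharper $H_J\ge\ln(J+1)$ (the paper uses $\ln t$) even gives $J_1\le\lceil T\rceil-1\le T$ cleanly for $T=(\Delta_{opt}/(\epsilon D_{opt}))^{D_{opt}\Delta_{opt}}$, whereas the paper's version, read literally, only gives $J_1\le\lceil T\rceil$.

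One closing remark needs correcting: restrictions do not inherit $\Delta(f_l)\le\Delta_{opt}$, because unlike depth, average depth can grow under restriction. For example, under the uniform distribution, let $f=+1$ on $x_1=0$ and let $f$ be the parity of $x_2,\dots,x_6$ on $x_1=1$; this $f$ has a tree of average depth $3.5$, yet its $x_1=1$ restriction needs average depth $5$. So this is an unproven step you are inheriting from the proof of Lemma~\ref{lemma:score_lowerbound_high_cost}, not a fact you can assert on your own.
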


\newcommand{\lemPhaseTwoBoundBody}{%
Let $J_1$ be the step at which the cost drops to $\epsilon D_{opt}$. The Algorithm~\ref{alg:buildtopdowndt} guarantees termination at a total number of steps $J_{total}$ satisfying:
\[ J_{total} \le J_1 \cdot e^{\Delta_{opt} D_{opt}} \]
}
\begin{lemma}[Phase 2: Termination]
\label{lemma:phase2_bound}
\lemPhaseTwoBoundBody
\end{lemma}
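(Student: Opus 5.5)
We must prove Lemma phase2: starting from step $J_1$ (the tree then has about $J_1$ leaves and cost at most $\epsilon D_{opt}$), the algorithm terminates within $J_{total}\le J_1 e^{\Delta_{opt}D_{opt}}$ total steps. The plan is to track the cost $C_j$ after step $J_1$ and argue by contradiction. Suppose the algorithm is still running after $J$ steps with $J > J_1$. Combining Lemma~\ref{lemma:cost_drops_as_score} with Lemma~\ref{lemma:score_lowerbound_high_cost} gives, at each step $j$,
\[
C_{j+1} \le C_j\Bigl(1-\frac{1}{j D_{opt}\Delta_{opt}}\Bigr).
\]
Because the algorithm has not terminated, Lemma~\ref{lemma:error_and_cost} yields $C_j \ge \Aerror > \epsilon$ at every step.

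Telescoping from $J_1$ to $J$ and using $1-x\le e^{-x}$ gives
\[
\epsilon < C_J \le \epsilon D_{opt}\exp\Bigl(-\frac{1}{D_{opt}\Delta_{opt}}\sum_{j=J_1}^{J-1}\frac1j\Bigr) \le \epsilon D_{opt}\Bigl(\frac{J_1}{J}\Bigr)^{1/(D_{opt}\Delta_{opt})},
\]
where the last step uses $\sum_{j=J_1}^{J-1} 1/j \ge \ln(J/J_1)$. Rearranging gives $J < J_1 D_{opt}^{D_{opt}\Delta_{opt}}$.

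This is weaker than the claimed bound, so the cost recursion alone does not suffice. I would therefore also use Lemma~\ref{lemma:lower_bound_on_score_general}, which gives an additive decrease: each step in phase 2 lowers the cost by at least $2\epsilon/(j\Delta_{opt})$. Summing over $j$ from $J_1$ to $J$ gives a total decrease of at least $(2\epsilon/\Delta_{opt})\ln(J/J_1)$. Since the cost can fall by at most $C_{J_1}\le \epsilon D_{opt}$ in total, we get $\ln(J/J_1)\le D_{opt}\Delta_{opt}/2$. Hence $J\le J_1 e^{D_{opt}\Delta_{opt}/2}\le J_1e^{\Delta_{opt}D_{opt}}$, which is exactly the statement, with room to spare.

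So the additive argument from Lemma~\ref{lemma:lower_bound_on_score_general} is the main route, and the multiplicative bound above is unnecessary. The main care points are the following.

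\textbf{Indexing.} Lemma~\ref{lemma:lower_bound_on_score_general} is stated for the tree after $j-1$ steps with $j$ leaves, so the harmonic sum runs over leaf counts from about $J_1$ to $J$. Off-by-one shifts only change constants absorbed by the slack factor $e^{D\Delta/2}$ versus $e^{D\Delta}$.

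\textbf{Nonnegativity.} The cost is nonnegative, which justifies the total-decrease bound.

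\textbf{Degenerate case.} If $J_1=0$ or $1$, interpret $J_1$ as $\max(J_1,1)$; this is consistent with Lemma~\ref{lemma:phase1_bound}'s $\max(\cdot,1)$.
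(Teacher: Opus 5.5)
Your final additive argument is the paper's proof: sum the per-step decrease from Lemma~\ref{lemma:lower_bound_on_score_general} over phase~2, lower-bound the harmonic sum by a logarithm, and compare against the cost budget $\epsilon D_{opt}$. The differences are that you keep the factor $2$ the paper discards, which gives the sharper $J_1 e^{\Delta_{opt}D_{opt}/2}$, and that you handle the off-by-one indexing and the $J_1=0$ case explicitly, where the paper's sum starting at $1/J_1$ glosses over them; the preliminary multiplicative bound via Lemma~\ref{lemma:score_lowerbound_high_cost} is correct but, as you note, unnecessary.
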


\begin{proof}[Proof of Theorem \ref{theorem:main_result}]
The size of the final tree is $J_{total} + 1$. By Lemma~\ref{lemma:phase2_bound},
\[
J_{total} \le J_1 \cdot e^{\Delta_{opt} D_{opt}}.
\]
Lemma~\ref{lemma:phase1_bound} gives
\[
J_1 \le \max\!\left(\left(\frac{\Delta_{opt}}{\epsilon D_{opt}}\right)^{\Delta_{opt} D_{opt}}\!,\; 1\right),
\]
so
\[
J_{total} \le \max\!\left(\left(\frac{e \cdot \Delta_{opt}}{\epsilon D_{opt}}\right)^{\Delta_{opt} D_{opt}}\!,\; e^{\Delta_{opt} D_{opt}}\right).
\]
\end{proof}

\section{Algorithmic Implications and Practical Implementation}
\label{sec:practical}

Algorithm~\ref{alg:buildtopdowndt} and the analysis presented so far assume exact computation of probabilities and influences. In practice, these quantities must be estimated from samples. This section addresses the algorithmic consequences of this reality. First, we show that the performance guarantee is robust to estimations, requiring only that we select a \enquote{good enough} leaf to split, not necessarily the absolute best. Second, we detail how to estimate the scores and the tree's error to guide the algorithm and its termination condition, providing sample complexity bounds for each step.

\subsection{A Robust Algorithm with Approximate Scores}

We first show that the algorithm's performance guarantee gracefully degrades if we only find a leaf whose score is a constant fraction of the maximum score.

\newcommand{\thmMainResultPracticalBody}{%
Let the conditions of Theorem \ref{theorem:main_result} hold. If at each step the algorithm selects a leaf $l'$ to split such that $\Score(l') \ge \frac{1}{4} \max_{l} \Score(l)$, then it returns an $\epsilon$-approximating tree of size at most:
\[ \max\left(\left(\frac{e \cdot \Delta_{opt}}{\epsilon D_{opt}}\right)^{4\Delta_{opt} D_{opt}}, e^{4\Delta_{opt} D_{opt}}\right) \]
}

\begin{theorem}
\label{theorem:main_result_practical}
\thmMainResultPracticalBody
\end{theorem}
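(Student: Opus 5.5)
The plan is to rerun the proof of Theorem~\ref{theorem:main_result} with every score lower bound weakened by the factor $1/4$. The cost remains the potential. Lemma~\ref{lemma:error_and_cost} (error $\le$ cost) and Lemma~\ref{lemma:inf_is_less_than_var_d} (initial cost $\le \Delta_{opt}$) do not depend on which leaf is split, so they carry over unchanged.

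Lemma~\ref{lemma:cost_drops_as_score} needs a small check: it is stated for splitting the chosen leaf on its most influential variable. The robust algorithm still splits $l'$ on $i(l')$, so the cost drops by exactly $\Score(l')$. Lemmas~\ref{lemma:lower_bound_on_score_general} and \ref{lemma:score_lowerbound_high_cost} lower bound $\max_l \Score(l)$. Multiplying by $1/4$ gives, at the step with $j$ leaves,
\[
\Score(l') \ge \frac{\epsilon}{2 j \Delta_{opt}}
\quad\text{and}\quad
\Score(l') \ge \frac{\cost(T^\circ)}{4 j D_{opt}\Delta_{opt}}.
\]

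Next I redo the two phase lemmas with these weaker rates. In Phase~1 the cost obeys $C_{j+1} \le C_j\bigl(1 - \tfrac{1}{4jD_{opt}\Delta_{opt}}\bigr)$. Hence
\[
C_J \le C_1 \exp\Bigl(-\tfrac{1}{4D_{opt}\Delta_{opt}}\sum_{j<J} 1/j\Bigr) \le \Delta_{opt}\, J^{-1/(4D_{opt}\Delta_{opt})},
\]
so the cost falls to $\epsilon D_{opt}$ after $J_1 \le \max\bigl((\Delta_{opt}/(\epsilon D_{opt}))^{4D_{opt}\Delta_{opt}},1\bigr)$ steps. This is the same computation as in Lemma~\ref{lemma:phase1_bound} with $D_{opt}\Delta_{opt}$ replaced by $4D_{opt}\Delta_{opt}$.

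In Phase~2, while the algorithm has not terminated, each step lowers the cost by at least $\Score(l')$. I will bound this drop using the same combination of the two score bounds that the original Phase~2 argument uses, each weakened by the factor $1/4$. The cost starts at $\epsilon D_{opt}$ and must stay nonnegative, so summing the drops gives a harmonic-type constraint of the form
\[
\sum_{j=J_1}^{J_{total}} \frac{c}{j} \le \epsilon D_{opt},
\]
with $c$ four times smaller than before. That yields $\log(J_{total}/J_1) \lesssim 4\Delta_{opt}D_{opt}$, i.e.\ $J_{total} \le J_1 e^{4\Delta_{opt}D_{opt}}$.

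Combining the phases gives
\[
J_{total} \le \max\!\left(\left(\frac{\Delta_{opt}}{\epsilon D_{opt}}\right)^{4\Delta_{opt}D_{opt}},1\right)\cdot e^{4\Delta_{opt}D_{opt}},
\]
which equals the stated maximum, exactly as in the proof of Theorem~\ref{theorem:main_result}.

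I expect the main obstacle to be Phase~2. I do not yet see how the additive bound $\epsilon/(2j\Delta_{opt})$ alone produces the exponent $\Delta_{opt}D_{opt}$. Summing that bound alone gives $\log(J_{total}/J_1) \le 2D_{opt}\Delta_{opt}$ before the factor $1/4$, which would actually be stronger than needed. So the check is to reproduce the appendix argument of Lemma~\ref{lemma:phase2_bound} and confirm that the constant in the exponent scales linearly with the inverse of the approximation factor. If it does, the factor $4$ lands exactly as stated.
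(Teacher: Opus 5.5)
Your proposal is correct and follows the paper's proof: weaken both score lower bounds by $1/4$, rerun Phase~1 multiplicatively to get $J_1\le\max\bigl((\Delta_{opt}/(\epsilon D_{opt}))^{4D_{opt}\Delta_{opt}},1\bigr)$, and combine the two phases. Your Phase~2 worry resolves as your own computation suggests. The paper uses only the additive bound $\Score(l'_j)\ge \epsilon/(2j\Delta_{opt})$ from Lemma~\ref{lemma:lower_bound_on_score_general}, not a combination of the two bounds; the exponent is simply the remaining cost $\epsilon D_{opt}$ divided by the per-step rate $\epsilon/(2\Delta_{opt})$. Starting the harmonic sum correctly at $J_1+1$ gives $J_{total}\le (J_1+1)e^{2\Delta_{opt}D_{opt}}$, and the spare factor $e^{2\Delta_{opt}D_{opt}}\ge 2$ absorbs the $+1$, yielding $J_{total}\le J_1e^{4\Delta_{opt}D_{opt}}$.
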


\begin{proof}[Proof Sketch of Theorem~\ref{theorem:main_result_practical}, full proof is in Appendix~\ref{app:proof:main_result_practical}]
    In contrast to Theorem~\ref{theorem:main_result}, this time our lower bound on the maximum score is $1/4$ of the lower bounds used to prove that theorem. In particular when decreasing cost from $\Delta$ to $\epsilon D_{opt}$ instead of our previous bound $\frac{\cost}{(j+1)\Delta D_{opt}}$ in Lemma~\ref{lemma:score_lowerbound_high_cost} we get $\frac{\cost}{(j+1)(4\Delta) D_{opt}}$. Similarly when decreasing cost further to $\epsilon$ instead of $\frac{\epsilon}{(j+1)\Delta}$ lower bound in Lemma~\ref{lemma:lower_bound_on_score_general} we get $\frac{\epsilon}{(j+1) (4\Delta)}$. In both of these cases it is algebraically equivalent to having a $4$ times larger $\Delta$ which means the maximum number of steps will be $$\left(\frac{e \cdot \iinf(f)}{\epsilon D_{opt}}\right)^{4\Delta D_{opt}}\leq \left(\frac{e \cdot \Delta}{\epsilon D_{opt}}\right)^{4\Delta D_{opt}}. $$
\end{proof}

\subsection{Estimating Leaf Scores}

Theorem~\ref{theorem:main_result_practical} is key to the practical applicability of our algorithm, as it guarantees correctness even when leaf scores are estimated with some imprecision. In this section, we describe how to estimate these scores from samples with sufficient accuracy.

To execute \cref{alg:buildtopdowndt}, we must estimate the score $\Score(l, i) = p_l \cdot \iinf_i(f_l)$, 
since both $p_l$ and the influence $\iinf_i(f_l)$ are unknown. Fortunately, as shown in Theorem~\ref{theorem:main_result_practical}, it suffices to select a leaf-variable pair whose score is within a constant factor (e.g., $1/4$) of the maximum. We observe that the score can be rewritten as:

$$
\Score(l, i) = \Pr_{x\sim \mu}[x \text{ reaches } l \text{ and } f(x) \neq f(x^{(i)})],
$$

where $x^{(i)}$ is defined by re-randomizing the $i$-th coordinate of $x$ independently according to the marginal distribution $\mu_i$. To estimate this probability, we sample $x \sim \mu$, and for each coordinate $i \in [n]$, independently sample $x' \sim \mu$ and define $x^{(i)}$ by setting $x^{(i)}_j=x_j$ for $j \neq i$ and $x^{(i)}_i=x'_i$.  This procedure correctly simulates a re-randomization of the $i$-th coordinate. Based on a multiset $E_i$ of such sampled pairs $(x, x^{(i)})$, we define the empirical estimator:
\begin{equation}
\begin{aligned}
\widehat{\Score}(l, i, E_i) =\\ \frac{1}{|E_i|} \sum_{(x, x^{(i)}) \in E_i} \mathbf{1}[x, x^{(i)} \text{ reach } l] \cdot \mathbf{1}[f(x) \ne f(x^{(i)})].
\end{aligned}
\label{eq:score_estimation}
\end{equation}

We formalize its unbiasedness below and provide the proof in Appendix~\ref{app:lemma:score_unbiased_estimator}.

\newcommand{\lemScoreUnbiasedEstimatorBody}{%
The estimator $\widehat{\Score}(l, i, E_i)$ is an unbiased estimator of the true score $\Score(l, i)$.
}
\begin{lemma}[Unbiasedness of the Score Estimator]
\label{lemma:score_unbiased_estimator}
\lemScoreUnbiasedEstimatorBody

\end{lemma}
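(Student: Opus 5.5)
The plan is to use linearity of expectation to reduce the claim to a single sampled pair, and then to verify the one-pair identity by conditioning on the event that $x$ reaches $l$. The pairs in $E_i$ are i.i.d. copies of $(x,x^{(i)})$, where $x\sim\mu$ and $x^{(i)}$ agrees with $x$ except that coordinate $i$ is replaced by an independent draw $x'_i\sim\mu_i$. Hence
\[
\mathbb{E}\bigl[\widehat{\Score}(l,i,E_i)\bigr]=\Pr\bigl[x \text{ and } x^{(i)} \text{ reach } l,\ f(x)\neq f(x^{(i)})\bigr],
\]
and it suffices to show that this probability equals $p_l\cdot\iinf_i^\mu(f_l)$.

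The key structural fact is about product measures: conditioned on $x$ reaching $l$, the coordinates of $x$ not fixed along the root-to-$l$ path are still distributed as the product $\prod_j \mu_j$ over those coordinates. By definition, $f_l$ is $f$ with the path coordinates fixed, and $\iinf_i^\mu(f_l)$ is computed under exactly this conditional product law. I will split into two cases according to whether variable $i$ is queried on the path to $l$.

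\emph{Case 1: $i$ is not queried on the path to $l$.} Membership in $l$ depends only on the path coordinates. Since $x$ and $x^{(i)}$ differ only in coordinate $i$, the event that $x$ reaches $l$ coincides with the event that $x^{(i)}$ reaches $l$. So the probability above equals $p_l\cdot\Pr\bigl[f(x)\neq f(x^{(i)})\mid x \text{ reaches } l\bigr]$. Under this conditioning, $x$ has the restricted product law, $x'_i\sim\mu_i$ is independent of it, and $f(x)=f_l(x)$ and $f(x^{(i)})=f_l(x^{(i)})$. The conditional probability is therefore exactly $\iinf_i^\mu(f_l)$.

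\emph{Case 2: $i$ is queried on the path to $l$.} Here $f_l$ does not depend on $x_i$, so $\iinf_i^\mu(f_l)=0$. On the other side, if both $x$ and $x^{(i)}$ reach $l$, then $x_i=x'_i$, so $x^{(i)}=x$ and $f(x)=f(x^{(i)})$. The joint event is then empty and the probability is $0$, matching the score.

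The only delicate point is Case 2, together with justifying the conditional-independence claim in Case 1. This case is exactly why the estimator requires both $x$ and $x^{(i)}$ to reach $l$, rather than only $x$. Once these two points are handled, combining the cases gives
\[
\mathbb{E}\bigl[\widehat{\Score}(l,i,E_i)\bigr]=\Score(l,i),
\]
which proves Lemma~\ref{lemma:score_unbiased_estimator}.
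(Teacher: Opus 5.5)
Your proof is correct and follows the paper's route. You reduce by linearity to a single pair, use that $x$ reaches $l$ iff $x^{(i)}$ does when $i$ is not queried on the path, and identify the conditional disagreement probability with $\iinf_i^\mu(f_l)$ via the product structure of $\mu$; the paper's proof instead simply asserts that $i$ is not fixed on the path to $l$, so your Case~2 (both sides vanish because joint arrival at $l$ forces $x^{(i)}=x$) fills a small gap and makes the lemma valid for every leaf--variable pair, which the union bound over all $(j+1)n$ pairs and the $\arg\max$ over all $(v,i)$ in the practical algorithm require.
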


To ensure the algorithm selects a leaf-variable pair with score at least $1/4$ of the optimal, it suffices to ensure that all empirical scores are accurate within a factor of 2. We achieve this using multiplicative Chernoff bounds. Since each term in the estimator is a bounded random variable in $\{0,1\}$, standard concentration results apply.

Let $t$ be a score threshold. Then for any pair $(l,i)$, we have:
\[
\begin{cases}
\text{If } \Score(l,i) \ge t, & 
\Pr\left[\widehat{\Score}(l,i,E_i) \le \tfrac{t}{2} \right] \le \exp\left(-\tfrac{|E_i| t}{8}\right), \\
\text{If } \Score(l,i) \le \tfrac{t}{4}, & 
\Pr\left[\widehat{\Score}(l,i,E_i) \ge \tfrac{t}{2} \right] \le \exp\left(-\tfrac{|E_i| t}{12}\right).
\end{cases}
\]

To ensure correct selection, we must guarantee that all leaf-variable pairs with score above $t$ are not underestimated (i.e., fall below $t/2$), and those with score below $t/4$ are not overestimated (i.e., exceed $t/2$). Note that in step $j$ the tree has $j+1$ leaves so the number of leaf-variable pairs is $(j+1)n$. This gives us:
$$
\Pr[\text{any error at step } j] \le (j+1) n \cdot \exp(-\min_i |E_i| \cdot t / 12).
$$

Applying Lemma~\ref{lemma:lower_bound_on_score_general}, which guarantees $t \ge \frac{\epsilon}{(j+1)\Delta}$, we obtain the lower bound:

$$
\Pr[\text{failure at step } j] \le (j+1)n \cdot \exp({-\frac{m_j \cdot \epsilon}{12(j+1)n}}),
$$

where $m_j := \min_i |E_i|$ is the minimum number of samples per variable at step $j$. Note that $\Delta(T)\le D(T)\le n$.

To ensure an overall failure probability of at most $\delta/2$ across all steps, it suffices to ensure that the failure probability at step $j$ is at most $\frac{\delta}{4j^2}$, since $\sum_{j=1}^\infty \frac{\delta}{4j^2} \le \frac{\delta}{2}$. This leads to the sample complexity requirement:

$$
m_j \ge M_S(j, \delta, \epsilon, n) := \frac{12(j+1)n}{\epsilon} \cdot \ln\left(\frac{4j^2(j+1)n}{\delta}\right).
$$

We define the function $M_S(j, \delta, \epsilon, n)$ for convenience. At each step $j$, we ensure that each multiset $E_i$ contains at least $M_S(j, \delta, \epsilon, n)$ samples. If not, we augment $E_i$ by drawing additional random samples as described above.

\subsection{Estimating Error}
To estimate the error of our bare decision tree $T^\circ$, we first assign labels to its leaves to create a labeled tree $T$, and then we estimate the error of $T$. This process involves two steps, each requiring a sufficiently large sample set.

\subsubsection*{Part 1: Assigning Labels}
In step $j$ of our algorithm, we use a set of $\hat{m}_j$ i.i.d. samples to label the leaves of $T^\circ$. For each leaf, we assign the label held by the majority of samples that fall into it. This ERM (Empirical Risk Minimization) procedure results in the labeled tree $T$. The following lemma bounds the excess error between our tree $T$ and the optimal labeling $T^*$.

\newcommand{\lemExcessErrorBody}{%
Let $T^\circ$ be a decision tree with $l$ leaves. Let $T$ be the tree obtained by assigning a label from $\{-1, +1\}$ to each leaf of $T^\circ$ based on the majority vote of $\hat{m}_j$ samples drawn i.i.d. from a distribution $D$ and labeled by a function $f$. Let $T^*$ be the optimal labeling of $T^\circ$. For any $\delta \in (0, 1)$, with probability at least $1 - \frac{\delta}{8j^2}$, the following bound holds:
$$ |\Aerror(T, f) - \Aerror(T^*, f)| \le \sqrt{\frac{2(l\ln(2) + \ln(16j^2/\delta))}{\hat{m}_j}} $$
}
\begin{lemma}
\label{lem:excess_error}
\lemExcessErrorBody
\end{lemma}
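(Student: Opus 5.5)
The plan is to prove the lemma as a uniform convergence bound over the finite class of all labelings of the fixed bare tree $T^\circ$. Write $\mathcal{H}$ for the set of labeled trees obtained by assigning a label in $\{-1,+1\}$ to each of the $l$ leaves of $T^\circ$, so that $|\mathcal{H}| = 2^l$. For $h\in\mathcal{H}$, let $\widehat{\Aerror}(h,f)$ denote the fraction of the $\hat m_j$ samples on which $h$ disagrees with $f$. The first observation is that the majority-vote labeling $T$ is exactly the empirical risk minimizer over $\mathcal{H}$. The reason is that the empirical error decomposes as a sum over leaves, and on each leaf the majority label minimizes the number of disagreements. Hence $\widehat{\Aerror}(T,f) \le \widehat{\Aerror}(T^*,f)$.

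Next, fix any $h\in\mathcal{H}$. The quantity $\widehat{\Aerror}(h,f)$ is an average of $\hat m_j$ i.i.d. indicator variables with mean $\Aerror(h,f)$. Hoeffding's inequality then gives
$\Pr[|\widehat{\Aerror}(h,f)-\Aerror(h,f)| > \eta] \le 2\exp(-2\hat m_j \eta^2)$.
A union bound over all $2^l$ labelings shows that, simultaneously for every $h\in\mathcal{H}$, the deviation is at most $\eta$, except with probability $2^{l+1}\exp(-2\hat m_j\eta^2)$. We set this failure probability equal to $\delta/(8j^2)$ and solve for $\eta$:
\[
\eta = \sqrt{\frac{l\ln 2 + \ln(16 j^2/\delta)}{2\hat m_j}}.
\]
The $\ln 2$ coming from the factor $2$ in Hoeffding combines with $\ln(8j^2/\delta)$ to produce exactly $\ln(16j^2/\delta)$, which matches the logarithmic term in the statement.

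Finally, on this good event we chain the inequalities. Since $T^*$ is the optimal labeling, $\Aerror(T,f) \ge \Aerror(T^*,f)$. In the other direction,
\[
\Aerror(T,f) \le \widehat{\Aerror}(T,f)+\eta \le \widehat{\Aerror}(T^*,f)+\eta \le \Aerror(T^*,f)+2\eta .
\]
It remains to check that $2\eta$ is at most the stated bound. We have $2\eta = \sqrt{2(l\ln 2+\ln(16j^2/\delta))/\hat m_j}$, which is exactly the claimed quantity. So $0 \le \Aerror(T,f)-\Aerror(T^*,f) \le$ the bound, and the absolute-value statement follows.

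The argument has no real obstacle. The only points needing care are the identification of majority vote with empirical risk minimization, including arbitrary tie-breaking and leaves that receive no samples, where any label is an ERM choice, and the constant bookkeeping so that the union bound reproduces precisely the $\ln(16j^2/\delta)$ term and the factor $2$ under the square root.
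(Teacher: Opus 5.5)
Your proposal is correct and follows essentially the same route as the paper's proof. Both treat majority vote as ERM over the $2^l$ leaf labelings and apply uniform convergence over that class (you derive it from Hoeffding plus a union bound, where the paper quotes it). Both then use the chain $\Aerror(T,f)\le\widehat{\Aerror}(T,f)+\eta\le\widehat{\Aerror}(T^*,f)+\eta\le\Aerror(T^*,f)+2\eta$ with confidence parameter $\delta/(8j^2)$, which gives the identical $\ln(16j^2/\delta)$ term and the same constant.
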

\begin{proof}[Proof sketch of \cref{lem:excess_error}, full proof is in Appendix~\ref{app:proof:excess_error}]
The problem of assigning labels to the $l$ leaves of the bare tree $T^\circ$ can be framed within the statistical learning framework. The set of all possible labeled trees constitutes our hypothesis space, denoted by $\mathcal{H}$, of size $|\mathcal{H}| = 2^l$. The majority vote procedure is an application of Empirical Risk Minimization (ERM) over this space. The result follows from a standard uniform convergence bound for finite hypothesis spaces, applied to the ERM hypothesis. The bound is typically two-sided, so we state it with an absolute value.
\end{proof}

\subsubsection*{Part 2: Estimating the Error of the Labeled Tree}
After creating the labeled tree $T$, we must estimate its true error, $\Aerror(T, f)$, to decide if it is a \enquote{good enough} hypothesis. To do this, we draw a second, independent set of $m''_j$ samples, denoted $S$, and compute the empirical error $\widehat{\Aerror}_S(T)$. The following lemma shows that if the sample sizes are large enough, this empirical estimate is a reliable indicator for termination.

\newcommand{\lemEstimationBody}{%
Let $\epsilon > 0$ be a target accuracy. Let $T$ be the tree from Part 1, constructed with $\hat{m}_j$ samples such that $\sqrt{\frac{2(l\ln(2) + \ln(16j^2/\delta))}{\hat{m}_j}} \le \frac{\epsilon}{8}$. Let $S$ be a new set of $m''_j$ i.i.d. samples, where $m''_j \ge \frac{32}{\epsilon^2}\ln(\frac{16j^2}{\delta})$. Let the event in Lemma \ref{lem:excess_error} hold. Then:
\[
\begin{cases}
     \text{If }\Aerror(T^*, f) \le \frac{\epsilon}{2}&\Pr\left[\widehat{\Aerror}_S(T) > \frac{3\epsilon}{4}\right] \le \frac{\delta}{8j^2}.\\
    \text{If }\Aerror(T, f) > \epsilon&\Pr\left[\widehat{\Aerror}_S(T) \le \frac{3\epsilon}{4}\right] \le \frac{\delta}{8j^2}.
\end{cases}
\]
}
\begin{lemma}[Reliable Error Estimation]
\label{lem:estimation}
\lemEstimationBody
\end{lemma}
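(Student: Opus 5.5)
The proof is a Hoeffding bound on the fresh sample $S$, combined with Lemma~\ref{lem:excess_error} to move between $T$ and $T^*$. We condition on the event of Lemma~\ref{lem:excess_error}, so that $|\Aerror(T,f)-\Aerror(T^*,f)|\le \epsilon/8$ by the assumption on $\hat{m}_j$. The labeled tree $T$ is built from samples independent of $S$. Hence, conditioned on $T$, the quantity $\widehat{\Aerror}_S(T)$ is an average of $m''_j$ i.i.d.\ indicators $\mathbf{1}[T(x)\neq f(x)]$ with mean $\Aerror(T,f)$. The two-sided Hoeffding inequality then gives, for any $\eta>0$,
\[
\Pr\bigl[\,|\widehat{\Aerror}_S(T)-\Aerror(T,f)|\ge \eta\,\bigr]\le 2\exp(-2m''_j\eta^2).
\]

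\emph{First case.} Suppose $\Aerror(T^*,f)\le \epsilon/2$. Then $\Aerror(T,f)\le \epsilon/2+\epsilon/8=5\epsilon/8$. The bad event $\widehat{\Aerror}_S(T)>3\epsilon/4$ forces an upward deviation of at least $\epsilon/8$. The one-sided Hoeffding bound gives probability at most
\[
\exp(-2m''_j\epsilon^2/64)=\exp(-m''_j\epsilon^2/32).
\]
With $m''_j\ge \frac{32}{\epsilon^2}\ln(16j^2/\delta)$, this is at most $\delta/(16j^2)\le \delta/(8j^2)$.

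\emph{Second case.} Suppose $\Aerror(T,f)>\epsilon$. The bad event $\widehat{\Aerror}_S(T)\le 3\epsilon/4$ requires a downward deviation exceeding $\epsilon/4$. This has probability at most $\exp(-2m''_j\epsilon^2/16)$, which is far smaller than the bound in the first case. Note that this case does not even use Lemma~\ref{lem:excess_error}.

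The only delicate point is justifying the independence needed for Hoeffding. We condition on the first sample set, which fixes $T$, and $S$ is fresh. The Lemma~\ref{lem:excess_error} event depends only on the first sample, so conditioning on it does not affect the distribution of $S$. The remaining work is the constant bookkeeping: the gap $\epsilon/8$ in the first case is the tightest, and it exactly matches the factor $32/\epsilon^2$ in the sample size.
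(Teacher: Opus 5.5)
Your proposal is correct and follows the paper's proof essentially step for step. You condition on the Lemma~\ref{lem:excess_error} event to get $\Aerror(T,f)\le 5\epsilon/8$ in the first case, then apply one-sided Hoeffding with gaps $\epsilon/8$ and $\epsilon/4$ to obtain $e^{-m''_j\epsilon^2/32}\le \delta/(16j^2)$ and $e^{-m''_j\epsilon^2/8}$, exactly as the paper does; your explicit independence argument for the fresh sample $S$ is a small addition the paper leaves implicit.
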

\begin{proof}[Proof sketch of Lemma~\ref{lem:estimation}, full proof is in Appendix~\ref{app:proof:estimation}]
Assuming the event in Lemma~\ref{lem:excess_error} holds, the condition on $\hat{m}_j$ implies
$|\Aerror(T,f)-\Aerror(T^*,f)|\le \epsilon/8$.

If $\Aerror(T^*,f)\le \epsilon/2$, then $\Aerror(T,f)\le 5\epsilon/8$, and Hoeffding's inequality gives
\(\Pr[\widehat{\Aerror}_S(T)>3\epsilon/4]\le \exp(-m''_j\epsilon^2/32)\le \delta/(8j^2)\)
under the stated lower bound on $m''_j$.
If instead $\Aerror(T,f)>\epsilon$, then Hoeffding's inequality similarly yields
\(\Pr[\widehat{\Aerror}_S(T)\le 3\epsilon/4]\le \exp(-m''_j\epsilon^2/8)\le \delta/(8j^2)\).
\end{proof}

\begin{figure*}[ht]
    \centering
    \includegraphics[width=0.9\textwidth]{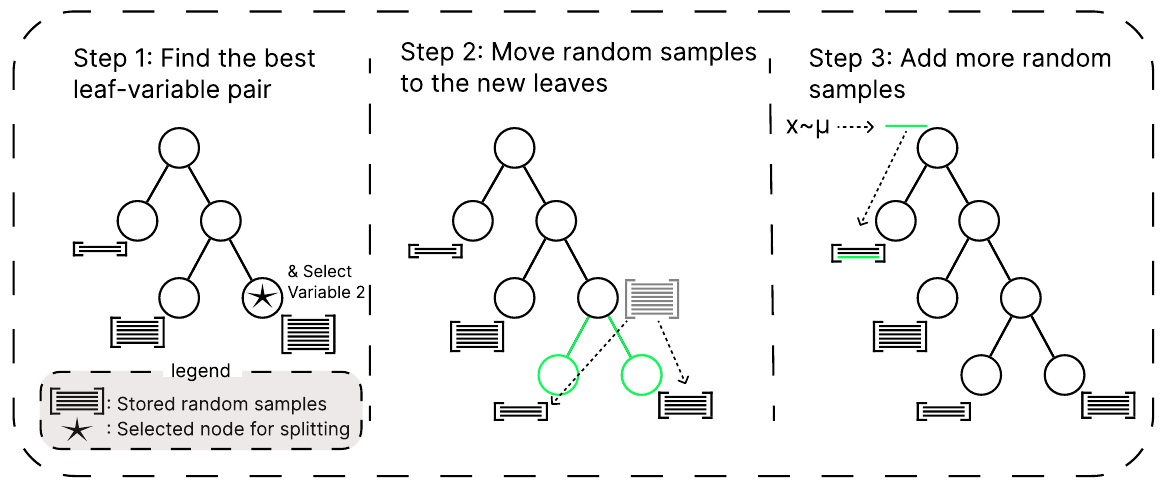}
    \caption{A visual depiction of one iteration of the practical top-down heuristic \cref{alg:buildtopdowndtPractical}. \textbf{Step 1:} The algorithm identifies the leaf and the variable that provide the best split based on estimated scores. \textbf{Step 2:} The samples stored at the selected leaf are partitioned between the two new children based on the splitting rule. \textbf{Step 3:} New random samples are drawn and added to the sample sets of all leaves in the tree to meet the increased sample complexity requirements for the next iteration.}
    \label{fig:algorithm_steps}
\end{figure*}

\subsubsection*{Overall Guarantees}
By combining the results of Lemma \ref{lem:excess_error} and Lemma \ref{lem:estimation}, we can state strong guarantees on the behavior of our algorithm over its entire execution. The algorithm terminates at the first step $j$ for which $\widehat{\Aerror}_S(T) \le \frac{3\epsilon}{4}$.

First, we apply a union bound over all possible steps $j=1, 2, \dots$. The probability that the high-probability event in Lemma \ref{lem:excess_error} or Lemma \ref{lem:estimation} fails at any step $j$ is at most $\sum_{j=1}^{\infty} (\frac{\delta}{8j^2} + \frac{\delta}{8j^2}) = \frac{\delta}{4} \sum_{j=1}^{\infty} \frac{1}{j^2} \leq  \frac{\delta}{2}$. Thus, with probability at least $1 - \frac{\delta}{2}$, the conclusions of both lemmas hold for all steps $j$ simultaneously if for $\hat{m}_j$ we have $\sqrt{\frac{2(l\ln(2) + \ln(16j^2/\delta))}{\hat{m}_j}} \le \frac{\epsilon}{8}$ and $m''_j \ge \frac{32}{\epsilon^2}\ln(\frac{16j^2}{\delta})$.

We define $M_{EE}(j, \epsilon, \delta)$ as this value.
We similarly define the bound for $\hat{m}_j$ as $M_{LL}(j, \epsilon, \delta):=\frac{128((j+1)\ln(2) + \ln(16j^2/\delta))}{\epsilon^2}.$
Assuming we are in this high-probability event, we have the following guarantees:

\begin{enumerate}
    \item \textbf{Correctness:} The algorithm will not terminate with a bad tree with high probability. If the algorithm terminates at step $j$, it must be that $\widehat{\Aerror}_S(T) \le \frac{3\epsilon}{4}$. By Lemma \ref{lem:estimation} (Case 2), this is highly unlikely to happen if $\Aerror(T, f) > \epsilon$. Therefore, upon termination, we can be confident that $\Aerror(T, f) \le \epsilon$.
    \item \textbf{Termination:} The algorithm is guaranteed to terminate if the bare decision tree is sufficiently good. If at some step $j$, the optimal $f$-completion of decision tree $T^\circ$ has an error $\Aerror(T^*, f) \le \frac{\epsilon}{2}$, then Lemma \ref{lem:estimation} (Case 1) guarantees that with high probability, $\widehat{\Aerror}_S(T) \le \frac{3\epsilon}{4}$. This satisfies the termination condition, ensuring the algorithm makes progress and eventually halts.
\end{enumerate}

\subsection{Algorithm}
The following algorithm formalizes the practical implementation of the top-down heuristic, incorporating the sample-based estimation procedures. It maintains separate sample sets for estimating scores, determining leaf labels, and estimating the final error, and it adaptively increases the size of these sets as the tree grows to maintain statistical confidence. A single iteration of this process is visualized in \cref{fig:algorithm_steps}. The algorithm is presented in \cref{alg:buildtopdowndtPractical}.

\subsection{Runtime of the Practical Algorithm}

To analyze the runtime of \cref{alg:buildtopdowndtPractical}, we begin by observing that each random sample can traverse at most $n$ internal nodes during the construction of the tree. Thus, the total runtime is proportional to the number of required samples multiplied by $n$.

Let $J$ denote the total number of splits performed by \cref{alg:buildtopdowndtPractical}. By Theorem~\ref{theorem:main_result_practical}, and aiming for $\epsilon/2$ error we have
\(
J \le \max\left(\left(\frac{2e \cdot \Delta_{opt}}{\epsilon D_{opt}}\right)^{4\Delta_{opt} D_{opt}},\; e^{4\Delta_{opt} D_{opt}}\right).
\)

Therefore, the total number of samples used depends on the bounds required for estimation accuracy at each split. Specifically, at each step $j$ we require $M_S(j, \delta, \epsilon/2, n)$ samples for score estimation, $M_{LL}(j, \epsilon/2, \delta)$ samples for leaf labeling, and $M_{EE}(j, \epsilon/2, \delta)$ samples for error estimation.

Summing over $j=1,\dots,J$ yields a total sample complexity of
$O\!\left(J \log J \cdot n \log n \cdot \frac{1}{\epsilon^2} \cdot \log\frac{1}{\delta} \right)$.
Since each sample may traverse up to $n$ nodes, the total runtime is larger by a factor of $n$, i.e.
$O\!\left(J \log J \cdot n^2 \log n \cdot \frac{1}{\epsilon^2} \cdot \log\frac{1}{\delta} \right)$.

This shows that even though the algorithm is parameter-free and sample-driven, its runtime remains polynomial in all relevant quantities for fixed $\Delta_{opt}, D_{opt}$, and $\epsilon$.

\section{Conclusion}

We extended the theoretical analysis of the classic top-down greedy decision tree heuristic from the uniform distribution to arbitrary product distributions. Our results show that if a target function admits an optimal decision tree with maximum depth $D_{opt}$ and average depth $\Delta_{opt}$ (under the product distribution), then the greedy heuristic constructs an $\epsilon$-approximating tree whose size is bounded by $\max\!\left(\left(\tfrac{e\cdot\Delta_{opt}}{\epsilon D_{opt}}\right)^{\Delta_{opt}D_{opt}},\,e^{\Delta_{opt}D_{opt}}\right)$, with an analogous robustness guarantee when only approximate leaf scores are available. In addition, we gave a practical, parameter-free implementation based on sample estimates that does not require prior knowledge of the optimal tree size or depth, helping narrow the gap between provable guarantees and the greedy methods used in practice.

\section{Limitations}

Practical decision-tree implementations such as CART rely heavily on a post-hoc pruning step to mitigate overfitting, which can dramatically reduce the size of the final tree. We deliberately analyze the unpruned tree, both because rigorous theoretical analyses of pruning are still sparse and because, even for the pruned model, the pre-pruning size remains a meaningful quantity: it controls the worst-case runtime and memory cost of the growth phase, and (as our sample-complexity analysis in Section~\ref{sec:practical} shows) it controls the number of samples needed to maintain statistical confidence at each split. Extending our analysis to incorporate provable pruning rules is an interesting open direction.

\section{Future Direction}
A natural direction for future work is to extend the analysis beyond Boolean product spaces to continuous features and threshold-based splits, as used in CART-style implementations. Our proof relies on two ingredients tied to the discrete product-space setting: (i) the notion of \emph{influence via coordinate re-randomization}, and (ii) the structural inequality from \cite{EveryDecisionTree}, which is established only for Boolean product spaces. By generalizing these two ingredients to threshold-based splits, we expect the analysis to plausibly carry over.

In addition Theorem~\ref{theorem:main_result} bounds the size of the greedy tree by a quantity exponential in $\Delta_{opt}D_{opt}\log(e/\epsilon)$. The superpolynomial lower bounds of \citet{koch2023superpolynomial} imply that some quasi-polynomial dependence on $s$ is unavoidable, even with membership queries, so the exponent cannot be reduced to a constant. Whether the specific dependence on $\Delta_{opt}D_{opt}$ in the exponent is tight, or can be improved (for instance to $\Delta_{opt}+D_{opt}$, $\max(\Delta_{opt},D_{opt})$, or to a quantity stated solely in $s$), is left as an open question.

\section*{Impact Statement}

This paper provides a better theoretical understanding of greedy decision tree learning algorithms beyond the uniform distribution, which may help increase the use of decision trees as a simple and easily interpretable decision-making model.

\section*{Acknowledgements}
This work is partially supported by DARPA expMath, ONR MURI 2024 award on Algorithms,
Learning, and Game Theory, Army-Research Laboratory (ARL) grant W911NF2410052, NSF
AF:Small grants 2218678, 2114269, 2347322.

\bibliography{reference}

@inproceedings{TopDownInduction,
  author       = {Guy Blanc and
                  Jane Lange and
                  Li{-}Yang Tan},
  editor       = {Thomas Vidick},
  title        = {Top-Down Induction of Decision Trees: Rigorous Guarantees and Inherent
                  Limitations},
  booktitle    = {11th Innovations in Theoretical Computer Science Conference, {ITCS}
                  2020, January 12-14, 2020, Seattle, Washington, {USA}},
  series       = {LIPIcs},
  volume       = {151},
  pages        = {44:1--44:44},
  publisher    = {Schloss Dagstuhl - Leibniz-Zentrum f{\"{u}}r Informatik},
  year         = {2020},
  url          = {https://doi.org/10.4230/LIPIcs.ITCS.2020.44},
  doi          = {10.4230/LIPICS.ITCS.2020.44},
  bibsource    = {dblp computer science bibliography, https://dblp.org}
}

@inproceedings{EveryDecisionTree,
  author={O'Donnell, R. and Saks, M. and Schramm, O. and Servedio, R.A.},
  booktitle={46th Annual IEEE Symposium on Foundations of Computer Science (FOCS'05)}, 
  title={Every decision tree has an influential variable}, 
  year={2005},
  volume={},
  number={},
  pages={31-39},
  doi={10.1109/SFCS.2005.34}}

@article{mehta2002decision,
  title={Decision tree approximations of Boolean functions},
  author={Mehta, Dinesh and Raghavan, Vijay},
  journal={Theoretical Computer Science},
  volume={270},
  number={1-2},
  pages={609--623},
  year={2002},
  publisher={Elsevier}
}

@article{ehrenfeucht1989learning,
  title={Learning decision trees from random examples},
  author={Ehrenfeucht, Andrzej and Haussler, David},
  journal={Information and Computation},
  volume={82},
  number={3},
  pages={231--246},
  year={1989},
  publisher={Elsevier}
}

@inproceedings{gopalan2008agnostically,
  title={Agnostically learning decision trees},
  author={Gopalan, Parikshit and Kalai, Adam Tauman and Klivans, Adam R},
  booktitle={Proceedings of the fortieth annual ACM symposium on Theory of computing},
  pages={527--536},
  year={2008}
}

@article{blanc2022properly,
  title={Properly learning decision trees in almost polynomial time},
  author={Blanc, Guy and Lange, Jane and Qiao, Mingda and Tan, Li-Yang},
  journal={Journal of the ACM},
  volume={69},
  number={6},
  pages={1--19},
  year={2022},
  publisher={ACM New York, NY}
}

@inproceedings{koch2023superpolynomial,
  title={Superpolynomial lower bounds for decision tree learning and testing},
  author={Koch, Caleb and Strassle, Carmen and Tan, Li-Yang},
  booktitle={Proceedings of the 2023 Annual ACM-SIAM Symposium on Discrete Algorithms (SODA)},
  pages={1962--1994},
  year={2023},
  organization={SIAM}
}

@inproceedings{brutzkus2020id3,
  title={Id3 learns juntas for smoothed product distributions},
  author={Brutzkus, Alon and Daniely, Amit and Malach, Eran},
  booktitle={Conference on Learning Theory},
  pages={902--915},
  year={2020},
  organization={PMLR}
}

@article{kalai2008decision,
  title={Decision trees are PAC-learnable from most product distributions: a smoothed analysis},
  author={Kalai, Adam Tauman and Teng, Shang-Hua},
  journal={arXiv preprint arXiv:0812.0933},
  year={2008}
}

@inproceedings{blanc2022open,
  title={Open Problem: Properly learning decision trees in polynomial time?},
  author={Blanc, Guy and Lange, Jane and Qiao, Mingda and Tan, Li-Yang},
  booktitle={Conference on Learning Theory},
  pages={5619--5623},
  year={2022},
  organization={PMLR}
}

@inproceedings{koch2024superconstant,
  title={Superconstant inapproximability of decision tree learning},
  author={Koch, Caleb and Strassle, Carmen and Tan, Li-Yang},
  booktitle={The Thirty Seventh Annual Conference on Learning Theory},
  pages={2979--3010},
  year={2024},
  organization={PMLR}
}

@article{o2007learning,
  title={Learning monotone decision trees in polynomial time},
  author={O'Donnell, Ryan and Servedio, Rocco A},
  journal={SIAM Journal on Computing},
  volume={37},
  number={3},
  pages={827--844},
  year={2007},
  publisher={SIAM}
}

@inproceedings{saks1986probabilistic,
  title={Probabilistic Boolean decision trees and the complexity of evaluating game trees},
  author={Saks, Michael and Wigderson, Avi},
  booktitle={27th Annual Symposium on Foundations of Computer Science (sfcs 1986)},
  pages={29--38},
  year={1986},
  organization={IEEE}
}

@inproceedings{kushilevitz1991learning,
  title={Learning decision trees using the Fourier spectrum},
  author={Kushilevitz, Eyal and Mansour, Yishay},
  booktitle={Proceedings of the twenty-third annual ACM symposium on Theory of computing},
  pages={455--464},
  year={1991}
}

@article{quinlan1986id3,
  title={Induction of decision trees},
  author={Quinlan, J. Ross},
  journal={Machine learning},
  volume={1},
  number={1},
  pages={81--106},
  year={1986},
  publisher={Springer}
}

@book{quinlan1993c45,
  title={C4. 5: programs for machine learning},
  author={Quinlan, J Ross},
  year={2014},
  publisher={Elsevier}
}

@book{breiman1984cart,
  title={Classification and regression trees},
  author={Breiman, Leo and Friedman, Jerome and Olshen, Richard A and Stone, Charles J},
  year={2017},
  publisher={Chapman and Hall/CRC}
}

@article{klusowski2024large,
  title={Large scale prediction with decision trees},
  author={Klusowski, Jason M and Tian, Peter M},
  journal={Journal of the American Statistical Association},
  volume={119},
  number={545},
  pages={525--537},
  year={2024},
  publisher={Taylor \& Francis}
}

@inproceedings{ordyniak2021parameterized,
  title={Parameterized complexity of small decision tree learning},
  author={Ordyniak, Sebastian and Szeider, Stefan},
  booktitle={Proceedings of the AAAI Conference on Artificial Intelligence},
  volume={35},
  pages={6454--6462},
  year={2021}
}

@inproceedings{
moakhar2026active,
title={Active Learning for Decision Trees with Provable Guarantees},
author={Arshia Soltani Moakhar and Tanapoom Laoaron and Faraz Ghahremani and Kiarash Banihashem and MohammadTaghi Hajiaghayi},
booktitle={The Fourteenth International Conference on Learning Representations},
year={2026},
url={https://openreview.net/forum?id=NOkjJPJIit}
}
\bibliographystyle{icml2026}

%%%%%%%%%%%%%%%%%%%%%%%%%%%%%%%%%%%%%%%%%%%%%%%%%%%%%%%%%%%%%%%%%%%%%%%%%%%%%%%
%%%%%%%%%%%%%%%%%%%%%%%%%%%%%%%%%%%%%%%%%%%%%%%%%%%%%%%%%%%%%%%%%%%%%%%%%%%%%%%
% APPENDIX
%%%%%%%%%%%%%%%%%%%%%%%%%%%%%%%%%%%%%%%%%%%%%%%%%%%%%%%%%%%%%%%%%%%%%%%%%%%%%%%
%%%%%%%%%%%%%%%%%%%%%%%%%%%%%%%%%%%%%%%%%%%%%%%%%%%%%%%%%%%%%%%%%%%%%%%%%%%%%%%
\newpage
\appendix
\onecolumn
\section*{Appendix Roadmap}
\phantomsection
\label{app:roadmap}
This appendix is organized as follows:
\begin{itemize}
    
    \item \cref{app:algorithms} presents the full pseudocode for our practical, sample-based variant of the top-down heuristic.
    \item \cref{sec:proofs} collects proofs and supporting lemmas that were omitted from the main text for readability.
    \item \cref{app:experiments} reports an empirical evaluation of the practical, parameter-free top-down heuristic (Algorithm~\ref{alg:buildtopdowndtPractical}).
    \item \cref{app:examples} works out two illustrative examples that quantify how the bound of Theorem~\ref{theorem:main_result} behaves in the balanced and the path-like (unbalanced) regimes.
\end{itemize}

\section{Extended related works}
% Statistical-risk analyses of CART.
\paragraph{Statistical-risk analyses of CART.} A complementary line of work studies the statistical generalization error of CART on samples drawn from a fixed (typically continuous) distribution. \citet{klusowski2024large} prove that, as the sample size $N\to\infty$, the population risk of CART on additive regression models converges at a polynomial rate. Their results concern \emph{statistical} risk decay with $N$ and address continuous features and pruning, which puts them in a different regime: they assume a prescribed data distribution and analyze how excess risk shrinks with more data, while we ask, given query access to a Boolean target $f$ admitting an optimal decision tree of small size and depth, whether the greedy heuristic itself produces a small tree close to $f$. Neither result implies the other.

% Parameterized complexity.
\paragraph{Parameterized complexity.} A separate body of work studies decision tree learning through the lens of parameterized complexity, where one is given an explicit dataset and seeks a tree of minimum size that classifies it correctly. \citet{ordyniak2021parameterized} show that the problem is fixed-parameter tractable in the size of the optimal tree (assuming bounded domain size), but their algorithm is not top-down and is tailored to the explicitly-given-dataset model rather than the function-access model studied here. We view this line of work as orthogonal to ours.

\section{Practical Algorithm}
\label{app:algorithms}
We include here the full pseudocode for \cref{alg:buildtopdowndtPractical}.
\begin{algorithm}
\caption{Practical Top-Down Heuristic for Decision Tree Induction}
\label{alg:buildtopdowndtPractical}
\begin{algorithmic}[1]
\STATE \textbf{function} PracticalBuildTopDownDT($f, \epsilon$, $\delta$)
\STATE Initialize $T^\circ$ with a single leaf, $v_{root}$. Let $j \gets 1$.
\STATE For each leaf $v \in \leaves{T^\circ}$, initialize sample sets:
\STATE $S_v^{LL} \gets$ Draw $M_{LL}(j, \epsilon, \delta)$ samples. \COMMENT{For leaf labeling}
\STATE $S_v^{EE} \gets$ Draw $M_{EE}(j, \epsilon, \delta)$ samples. \COMMENT{For error estimation}
\STATE For $i=1 \dots n$, $E_v^{(i)} \gets$ Draw $M_S(j, \delta, \epsilon, n)$ sample pairs. \COMMENT{For score estimation}
\WHILE{True}
    \STATE \textit{// Estimate scores, labels, and error for all leaves}
    \FOR{each leaf $v \in \leaves{T^\circ}$}
        \STATE For $i=1 \dots n$, calculate $\widehat{\Score}(v, i, E_v^{(i)})$ using \cref{eq:score_estimation}.
        \STATE $label_v \gets \text{majority label of } f(x) \text{ for } x \in S_v^{LL}$.
        \STATE $mc_v \gets \#\{x \in S_v^{EE} \mid f(x) \neq label_v \}$.
    \ENDFOR
    \STATE \textit{// Check termination condition}
    \IF{$\sum_{v\in \leaves{T^\circ}} mc_v  \le \frac{3\epsilon}{4} \cdot M_{EE}(j, \epsilon, \delta)$}
        \STATE \textbf{return} $T^\circ$ with labels $\{label_v\}$.
    \ENDIF
    \STATE $j \gets j+1$.
    \STATE \textit{// Select best leaf and split}
    \STATE $v^*, i^* \gets \arg\max_{v,i} \widehat{\Score}(v, i, E_v^{(i)})$.
    \STATE Split leaf $v^*$ on variable $i^*$ into new children $v_0, v_1$.
    \STATE \textit{// Partition old samples and draw new ones to meet new sample complexity}
    \FOR{each new leaf $v_{new} \in \{v_0, v_1\}$}
        \STATE Partition samples from $S_{v^*}^{LL}, S_{v^*}^{EE}$ and $E_{v^*}^{(i)}$ that reach $v_{new}$ into new sets $S_{v_{new}}^{LL}, S_{v_{new}}^{EE}, E_{v_{new}}^{(i)}$.
    \ENDFOR
    \FOR{each leaf $v \in \leaves{T^\circ}$}
        \STATE Draw $M_{LL}(j+1, \epsilon, \delta) - M_{LL}(j, \epsilon, \delta)$ new samples and add to $S_v^{LL}$ for the $v$ they reach.
        \STATE Draw $M_{EE}(j+1, \epsilon, \delta) - M_{EE}(j, \epsilon, \delta)$ new samples and add to $S_v^{EE}$ for the $v$ they reach.
        \FOR{$i=1 \dots n$}
            \STATE Draw $M_S(j+1, \delta, \epsilon, n) - M_S(j, \delta, \epsilon, n) $ new sample pairs and add to $E_v^{(i)}$ for the $v$ they reach.
        \ENDFOR
    \ENDFOR
\ENDWHILE
\end{algorithmic}
\end{algorithm}

\section{Omitted Proofs}
\label{sec:proofs}

We first state auxiliary lemmas used throughout the paper, and then give the deferred proofs in the order in which their statements appear in the main text.

\newcommand{\restateTheoremWithTitle}[3]{%
\begingroup
    \renewcommand{\thetheorem}{\ref{#1}}%
    \begin{theorem}[#2]
    #3
    \end{theorem}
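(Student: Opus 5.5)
We plan to prove Lemma~\ref{lemma:error_and_cost} one leaf at a time. The error of the $f$-completion splits over leaves as $\Aerror(T^\circ,f)=\sum_{v\in\leaves{T^\circ}} p_v\,\Aerror(f_v,\pm 1)$, because each leaf receives the majority label of $f_v$. The cost has the same form, $\sum_v p_v \iinf(f_v)$. So it suffices to show that every $g:\{0,1\}^m\to\{\pm1\}$, under a product distribution, satisfies
\[
\Aerror(g,\pm 1)\le \iinf(g),
\]
and then sum with weights $p_v$. Note that $f_v$ is a function on a product space: fixing the coordinates queried on the path to $v$ leaves the remaining coordinates independent with their original marginals.

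Rather than the induction sketched in the main text, we would go through the variance, in two steps. First, write $a=\Pr[g=+1]$ and $b=\Pr[g=-1]$. Then $\Var(g)=4ab$, and since $\max(a,b)\ge 1/2$ we get $\Var(g)\ge 2\min(a,b)=2\,\Aerror(g,\pm1)$. This is the content of Lemma~\ref{lemma:variance_bounds}, already used in the main text.

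Second, we express each influence as a conditional variance. Condition on all coordinates except the $i$-th, and let $q$ be the conditional probability that $g=+1$ when $x_i$ is drawn from $\mu_i$. Then $x$ and $x^{(i)}$ are two independent draws of $x_i$ from $\mu_i$, so they disagree with probability $2q(1-q)$. The conditional variance of the $\pm1$ value in that coordinate is $4q(1-q)$. Therefore
\[
\iinf_i(g)=\tfrac12\,\mathbb{E}_{x_{-i}}\bigl[\Var_{x_i}(g)\bigr].
\]
The Efron--Stein inequality for product measures gives $\Var(g)\le\sum_i \mathbb{E}[\Var_{x_i}(g)]=2\,\iinf(g)$. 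Combining the two steps yields $\Aerror(g,\pm1)\le \Var(g)/2\le\iinf(g)$.

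The main point to check is the factor-of-two bookkeeping between influence, defined by re-randomization rather than flipping, and conditional variance. We also need Efron--Stein to hold for arbitrary biases $p_i$; it does, since it only uses independence of the coordinates, for example via the martingale (Doob) decomposition over coordinates. If we instead follow the paper's inductive route, the hardest step is the inductive algebra. For that route, write $g_0,g_1$ for the restrictions on the last bit, which has bias $p$. Then $\iinf(g)=(1-p)\iinf(g_0)+p\,\iinf(g_1)+2p(1-p)\Pr[g_0\ne g_1]$. The error of $g$ is at most $(1-p)\Aerror(g_0)+p\Aerror(g_1)$ plus a correction that is needed only when $g_0$ and $g_1$ have different majority labels. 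That correction must be absorbed by the disagreement term, and this is where the computation is delicate, which is why we prefer the variance route above.
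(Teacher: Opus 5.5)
Taking the displayed restatement macro, as you do, to be Lemma~\ref{lemma:error_and_cost}, your proof is correct and follows a genuinely different route from the paper's. Note that the paper uses the theorem form of this macro only for Theorem~\ref{theorem:main_result_practical}; if that was the intended target, your proposal does not address it.

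Both arguments reduce to the single-leaf bound $\Aerror(g,\pm1)\le\iinf(g)$ in the same way. The paper then proves this bound directly by induction on the number of variables. It splits on the last coordinate, applies the hypothesis to $g_0,g_1$, and combines the majority constraint with the decomposition of total influence under restriction. You instead prove the stronger Poincar\'e-type inequality $\Var(g)\le\sum_i\mathbb{E}[\Var_{x_i}(g)]=2\,\iinf(g)$ via Efron--Stein, and chain it with $\Aerror(g,\pm1)\le\Var(g)/2$. Your route is shorter and modular and makes the constants transparent, at the price of importing Efron--Stein. The paper's induction is elementary and self-contained.

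Your identification $\iinf_i(g)=\tfrac12\,\mathbb{E}_{x_{-i}}[\Var_{x_i}(g)]$ is right, and is more careful than the paper's write-up. The paper's base case asserts $\iinf=4\mu_1(1-\mu_1)$. Its final inductive step implicitly credits the split coordinate with influence at least $4\mu_n(1-\mu_n)(x-y)$. Under the re-randomization definition the correct factor is $2$, as in your formula $2p(1-p)\Pr[g_0\ne g_1]$.

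The induction still closes with the correct constant, and the step you call delicate takes one line. Let $x=\Pr[g_0=-1]\ge\tfrac12\ge y=\Pr[g_1=-1]$ with overall majority $+1$; the equal-majority case is immediate. The majority condition $(1-p)x+py\le\tfrac12$ gives $2p(x-y)\ge 2x-1$. Hence
$\Aerror(g,\pm1)=(1-p)x+py\le(1-p)(1-x)+py+2p(1-p)(x-y)\le\iinf(g)$,
where the last step uses the inductive hypothesis on $g_0,g_1$ and $\Pr[g_0\ne g_1]\ge x-y$.
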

\endgroup
}
\newcommand{\restateLemmaWithTitle}[3]{%
\begingroup
    % Note: in this file, lemma shares the theorem counter (\newtheorem{lemma}[theorem]{Lemma}),
    % so we must override \thetheorem (not \thelemma) to force the original numbering.
    \renewcommand{\thetheorem}{\ref{#1}}%
    \begin{lemma}[#2]
    #3
    \end{lemma}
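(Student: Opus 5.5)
We take Lemma~\ref{lem:estimation} (Reliable Error Estimation) to be the statement in question. The plan is to condition on the labeled tree $T$ produced in Part~1, then apply Hoeffding's inequality to the fresh sample set $S$.

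\textbf{Conditioning.} Because $S$ is drawn independently of the $\hat{m}_j$ labeling samples, conditioning on the labeling samples makes $T$ a fixed hypothesis. Then $\widehat{\Aerror}_S(T)$ is an average of $m''_j$ i.i.d.\ indicators $\mathbf{1}[T(x)\neq f(x)]$ with mean exactly $\Aerror(T,f)$. Every probability bound below holds conditionally on $T$, and hence also unconditionally on the event of Lemma~\ref{lem:excess_error}. This is the one point needing care: the event of Lemma~\ref{lem:excess_error} depends only on the labeling samples, so it does not affect the distribution of $S$.

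\textbf{Closeness of $T$ to $T^*$.} On the event of Lemma~\ref{lem:excess_error}, the hypothesis on $\hat{m}_j$ gives
\[
|\Aerror(T,f)-\Aerror(T^*,f)|\le \frac{\epsilon}{8}.
\]

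\textbf{First case.} Suppose $\Aerror(T^*,f)\le \epsilon/2$. Then $\Aerror(T,f)\le 5\epsilon/8$. The event $\widehat{\Aerror}_S(T)>3\epsilon/4$ therefore requires the empirical mean to exceed its expectation by more than $\epsilon/8$. The one-sided Hoeffding bound gives
\[
\Pr\left[\widehat{\Aerror}_S(T)>\tfrac{3\epsilon}{4}\right]\le \exp\!\left(-2m''_j(\epsilon/8)^2\right)=\exp\!\left(-\frac{m''_j\epsilon^2}{32}\right).
\]
Using $m''_j\ge \frac{32}{\epsilon^2}\ln(16j^2/\delta)$, the right-hand side is at most $\delta/(16j^2)\le \delta/(8j^2)$.

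\textbf{Second case.} Suppose $\Aerror(T,f)>\epsilon$. The event $\widehat{\Aerror}_S(T)\le 3\epsilon/4$ requires a downward deviation of more than $\epsilon/4$. Hoeffding gives
\[
\Pr\left[\widehat{\Aerror}_S(T)\le\tfrac{3\epsilon}{4}\right]\le \exp\!\left(-2m''_j(\epsilon/4)^2\right)=\exp\!\left(-\frac{m''_j\epsilon^2}{8}\right).
\]
This is even smaller than the first-case bound, so it is also at most $\delta/(8j^2)$.

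The arithmetic is routine. The only step I expect to need attention is the conditioning/independence justification above.
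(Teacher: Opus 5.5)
Your proof is correct and matches the paper's argument for Lemma~\ref{lem:estimation} step for step: you use the $\epsilon/8$ closeness of $T$ to $T^*$ from Lemma~\ref{lem:excess_error}, then one-sided Hoeffding bounds with deviations $\epsilon/8$ and $\epsilon/4$ in the two cases, giving exactly the paper's $\exp(-m''_j\epsilon^2/32)$ and $\exp(-m''_j\epsilon^2/8)$. You add only the explicit conditioning on the labeling samples, which makes $T$ a fixed hypothesis independent of $S$ and which the paper leaves implicit; note also that the statement as given is the paper's generic restatement template, so identifying it with Lemma~\ref{lem:estimation} was your own interpretive choice.
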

\endgroup
}

\subsection{Auxiliary lemmas}
To bound the number of steps, we must show that the $\Score$ of the chosen leaf is sufficiently large. We use the following critical inequality from \cite{EveryDecisionTree}, which lower-bounds the maximal influence.

\begin{lemma}[Max Influence vs. Variance, from \cite{EveryDecisionTree}]
\label{EveryDecisionTreeCorollary}
For any function $f$ computable by a decision tree $T$, and any product distribution $\mu$:
\[ \max_i \iinf_i(f) \ge \frac{\Var(f)}{\Delta(T)} \]
where $\Delta(T)$ is the average depth of $T$.
\end{lemma}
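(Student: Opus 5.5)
This is the OSSS inequality, and I would reprove it by their hybrid argument rather than treat it as a black box. The goal is the stronger statement
\[
\Var(f) \le \sum_{i=1}^n \delta_i(T)\, \iinf_i(f),
\qquad \delta_i(T) := \Pr_{x\sim\mu}[T \text{ queries } x_i \text{ on input } x].
\]
By linearity of expectation over the nodes on the path of $x$, $\sum_i \delta_i(T)$ is the expected number of queries, which is $\sum_{v} p_v\,\mathrm{depth}(v) = \Delta(T)$. Hence $\sum_i \delta_i \iinf_i(f) \le \Delta(T)\max_i \iinf_i(f)$, and the lemma follows.

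\textbf{Setup.} Draw $x,y\sim\mu$ independently and run $T$ on $x$, querying coordinates $i_1,\dots,i_t$ (here $t$ and the $i_k$ are random). Define $w$ to agree with $y$ on the queried coordinates and with $x$ on the rest.

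The first key claim is that $w\sim\mu$ and $w$ is independent of the root-to-leaf path of $x$, hence of $f(x)$. Conditioned on the path, the queried bits of $x$ are fixed, while the unqueried bits of $x$ are still fresh draws from their product marginals. The queried bits of $w$ come from the independent $y$. So $w$ is a product-distributed sample independent of the path.

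It follows that $\mathbb{E}[f(x)f(w)] = \mathbb{E}[f]^2$, and therefore
\[
\Var(f) = \mathbb{E}[f(x)^2] - \mathbb{E}[f(x)f(w)] = \mathbb{E}\bigl[f(x)\,(f(x)-f(w))\bigr].
\]

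\textbf{Telescoping.} Define hybrids $z^{(0)}=x, z^{(1)},\dots,z^{(t)}=w$, where $z^{(k)}$ replaces the first $k$ queried coordinates of $x$ by $y$. Then $f(x)-f(w)=\sum_k \bigl(f(z^{(k-1)})-f(z^{(k)})\bigr)$. Regroup this sum by the identity $i$ of the coordinate swapped at step $k$. For a fixed $i$, the swap happens only on the event that $T$ queries $x_i$, which has probability $\delta_i$.

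The second key claim is that, conditioned on reaching the node that queries $x_i$, the pair $(z^{(k-1)},z^{(k)})$ is distributed as $(u,u^{(i)})$ with $u\sim\mu$ and $u^{(i)}$ the re-randomization of coordinate $i$. The conditioning event depends only on earlier queried coordinates of $x$, and those have already been overwritten by fresh $y$-values. Also $x_i$ and $y_i$ are still fresh. So each group contributes at most $\delta_i\cdot \Pr[f(u)\neq f(u^{(i)})]$ times the constant $\max|f(a)-f(b)|$.

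\textbf{Main obstacle.} I expect the main difficulty to be this last step, for two reasons. First, making the conditional-independence claim rigorous requires care, because the query order is adaptive. I would formalise it by induction along the path: reveal $x$ and $y$ coordinate by coordinate as $T$ queries them. Second, the constant has to come out right. The crude bound $|f(x)\,(f(a)-f(b))|\le 2\cdot\mathbf{1}[f(a)\neq f(b)]$ loses a factor of $2$. To remove it I would use the symmetry of $(z^{(k-1)},z^{(k)})$ under swapping $x_i\leftrightarrow y_i$. Alternatively I would work with the normalisation $\Var(f)=4\Pr[f=1]\Pr[f=-1]$ implicit in Lemma~\ref{lemma:variance_bounds}, and check that the constant matches the statement of \cite{EveryDecisionTree}. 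Failing that, I would simply invoke their theorem as stated for product distributions.
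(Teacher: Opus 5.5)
Your hybrid argument is the standard OSSS proof, and it is correct up to the constant. Specifically:
\begin{itemize}
\item $w$ is independent of the leaf reached by $x$, hence of $f(x)$;
\item conditioned on $x$ reaching a node that queries $x_i$, the pair $(z^{(k-1)},z^{(k)})$ is distributed as $(u,u^{(i)})$;
\item $\sum_i\delta_i(T)=\Delta(T)$.
\end{itemize}
The paper gives no proof of its own here; it just cites \cite{EveryDecisionTree}. But what your argument delivers is
\[
\Var(f)=1-\mathbb{E}[f(x)f(w)]=2\Pr[f(x)\neq f(w)]\le 2\sum_{i}\delta_i(T)\,\iinf_i(f)\le 2\,\Delta(T)\max_i\iinf_i(f).
\]
The step you single out as the main obstacle, removing this factor $2$, cannot be done. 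With the paper's normalisation the statement is false. That normalisation is $\iinf_i$ defined by re-randomising coordinate $i$, together with $\Var(f)=4\Pr[f=1]\Pr[f=-1]$, which is exactly the normalisation you proposed to check against.

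Take $f(x)=(-1)^{x_1}$ under the uniform distribution and $T$ the single query to $x_1$. Then $\Var(f)=1$ and $\Delta(T)=1$, but $\max_i\iinf_i(f)=\Pr[x_1\neq x_1']=1/2$. With a $p$-biased first coordinate the ratio is $\Var(f)/\iinf_1(f)=4p(1-p)/(2p(1-p))=2$ again, so the constant in your bound is tight.

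Neither of your fallbacks can rescue the stated constant. Symmetrisation cannot help: in the one-query case, exchangeability of $(x,w)$ gives exactly $\tfrac12\mathbb{E}[(f(x)-f(w))^2]=2\Pr[f(x)\neq f(w)]$. Citing OSSS "as stated" does not help either. Their influence is normalised so that a balanced function computed by a depth-$d$ tree has a variable of influence at least $1/d$. On the uniform cube it is the flip influence $\Pr[f(x)\neq f(x^{\oplus i})]=2\iinf_i(f)$, which equals $1$ in the example above.

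So the correct statement, and what your argument proves, is $\max_i\iinf_i(f)\ge\Var(f)/(2\Delta(T))$. Equivalently, the inequality as written holds if influence is normalised as $2\Pr[f(x)\neq f(x^{(i)})]$. This is in fact the normalisation the paper implicitly uses in the base case of Lemma~\ref{lemma:error_and_cost}, where it writes $\iinf(f_v)=4\mu_1(1-\mu_1)$ for a single-variable function.

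You should state the lemma with $2\Delta(T)$ and propagate the factor downstream:
\begin{itemize}
\item Lemma~\ref{lemma:score_lowerbound_high_cost} becomes $\Score(l^*)\ge\cost(T^\circ)/(2jD_{opt}\Delta_{opt})$, which doubles the exponent in the Phase~1 bound.
\item Lemma~\ref{lemma:lower_bound_on_score_general} drops to $\epsilon/(j\Delta_{opt})$. This is all that the proof of Lemma~\ref{lemma:phase2_bound} actually uses, so Phase~2 is unaffected.
\end{itemize}
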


\subsubsection{Variance bounds}
\newcommand{\lemVarianceBoundsBody}{%
For any Boolean function $f : \{0,1\}^n \to \{\pm 1\}$ and any $i \in [n]$, we have:
\[ \iinf_i(f) \leq
2\Aerror(f, \pm 1) \leq \Var(f)
\]
}
\begin{lemma}[Variance Bounds]
\lemVarianceBoundsBody
\label{lemma:variance_bounds}
\end{lemma}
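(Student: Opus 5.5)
The plan is to reduce everything to the single number $p := \Pr_{x\sim\mu}[f(x)=+1]$, with $q := 1-p$. Then $\Aerror(f,\pm 1) = \min(p,q)$ by definition. Using the standard variance of a $\pm 1$-valued function,
\[
\Var(f) = \mathbb{E}[f^2] - \mathbb{E}[f]^2 = 1-(p-q)^2 = 4pq .
\]
Both inequalities then become elementary statements about $p$, plus one coupling observation for the influence term.

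\textbf{Right inequality.} Write $4pq = 4\min(p,q)\max(p,q)$. Since $\max(p,q)\ge 1/2$, this gives $4pq \ge 2\min(p,q) = 2\,\Aerror(f,\pm1)$.

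\textbf{Left inequality.} I would use the definition $\iinf_i(f) = \Pr[f(x)\neq f(x^{(i)})]$, where $x^{(i)}$ is $x$ with coordinate $i$ re-randomized independently according to $\mu_i$. The key point is that, because $\mu$ is a product distribution, $x^{(i)}$ is itself distributed exactly as $\mu$. This holds even though $x$ and $x^{(i)}$ are correlated.

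Let $m\in\{\pm1\}$ be a minority label, so that $\Pr_\mu[f=m]=\min(p,q)$. If $f(x)\neq f(x^{(i)})$, then one of $f(x)$ or $f(x^{(i)})$ must equal $m$. A union bound over these two events, each of probability $\min(p,q)$ by the marginal observation, gives
\[
\iinf_i(f) \le 2\min(p,q) = 2\,\Aerror(f,\pm1).
\]

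\textbf{Main obstacle.} There is no genuinely hard step. The only points needing care are the following:
\begin{itemize}
\item The union bound must not assume independence of $x$ and $x^{(i)}$. Under independence one would get the sharper $2pq$, but that does not apply here.
\item One must check that the marginal of $x^{(i)}$ is $\mu$, which uses the product structure.
\item The paper's intended definition of $\Var$ should be confirmed to coincide with $4pq$ for $\pm1$-valued functions.
\end{itemize}
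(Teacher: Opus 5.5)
Your proof is correct, and your variance half is the same computation as the paper's ($\Var(f)=4\mu(1-\mu)\ge 2\mu$). For the influence half you take a genuinely different route, and it is the better one.

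\textbf{The paper's route.} The paper passes to the bit-flip $x^{\oplus i}$. It bounds $\iinf_i(f)\le\Pr[f(x)\neq f(x^{\oplus i})]$, then invokes the pairing identity $\Pr[f(x)\neq f(x^{\oplus i})]=2\Pr[f(x)=+1,\,f(x^{\oplus i})=-1]$, and finally drops the second event. That identity requires $x^{\oplus i}\sim\mu$, which holds only when $\mu_i$ is uniform. In fact the flip route cannot work for a biased coordinate at all. Take $f(x)=+1$ iff $x_1=1$, with $\Pr[x_1=1]=0.1$. Then $\Pr[f(x)\neq f(x^{\oplus 1})]=1$, while $2\Aerror(f,\pm1)=0.2$. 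So the intermediate quantity already exceeds the target after the first step, because that step discards the factor $2\mu_i(1-\mu_i)$.

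\textbf{Your route.} You never flip a bit. You use only that the re-randomized point $x^{(i)}$ has the same law $\mu$ as $x$, which is exactly where the product structure enters. You then apply a union bound over which endpoint carries the minority label. This gives $\iinf_i(f)\le 2\min(p,q)$ for every product distribution; on the example above it gives $0.18\le 0.2$.

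\textbf{Comparison.} The flip-based argument is the natural one in the uniform setting of the earlier work, where $x^{\oplus i}$ is again uniform, but it does not transfer. Your argument is shorter and is the version that actually holds in the generality the paper claims.
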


\begin{proof}[Proof of Lemma~\ref{lemma:variance_bounds}]
First we prove that $\Var(f)\geq 2\Aerror(f, \pm1)$.
W.L.O.G. assume \( \mu = \Pr[f(x) = +1] \leq \Pr[f(x) = -1] \), so \( \Aerror(f, \pm1) = \mu \) and \( \mu < \tfrac{1}{2} \). Then \( \Var(f) = 1 - (2\mu - 1)^2 = 4\mu(1 - \mu) > 2\mu = 2\Aerror(f, \pm1) \). Moreover, we know:
\begin{align*}
\iinf_i(f) &= \Pr[f(x) \neq f(x^{(i)})] \leq \Pr[f(x) \neq f(x^{\oplus i})] \\
&= 2\Pr[f(x) = +1, f(x^{\oplus i}) = -1] \leq 2\Pr[f(x) = +1] = 2\mu = 2\Aerror(f, \pm1)
\end{align*}
\end{proof}

\subsection{Proof of Lemma~\ref{lemma:error_and_cost} (Error is Bounded by Cost)}
\label{app:proof:error_and_cost}
\restateLemmaWithTitle{lemma:error_and_cost}{Error is Bounded by Cost}{\lemErrorAndCostBody}
\begin{proof}[Proof of Lemma~\ref{lemma:error_and_cost}]
We decompose the error of the unlabeled tree $T^\circ$ by leaves. For each leaf $v$, by definition we know:
\[
\Aerror(f_v, \pm 1) := \min\left( \Aerror(f_v, -1), \Aerror(f_v, 1) \right),
\]
so that
\[
\Aerror(T^\circ, f) = \sum_{v \in \leaves{T^\circ}} p_v \cdot \Aerror(f_v, \pm 1).
\]
It's enough to prove $\Aerror(f_v, \pm 1) \leq \iinf(f_v)$ for every leaf $v$.

\textbf{Base case ($n = 1$):} Let $f$ depend only on one variable $x_1$. If $f(0) = f(1)$, then both the error and influence are $0$. Otherwise, assume w.l.o.g.\ that $0 < \mu_1 \leq \frac{1}{2}$. Then:
\[
\Aerror(f_v, \pm 1) = \mu_1, \quad \iinf(f_v) = 4\mu_1(1 - \mu_1).
\]
We compute:
\[
4\mu_1(1 - \mu_1) - \mu_1 = 3\mu_1 - 4\mu_1^2 > 0,
\]
so $\Aerror(f_v, \pm 1) \leq \iinf(f_v)$. A symmetric argument applies when $\mu_1 > \frac{1}{2}$.

\textbf{Inductive step:} Assume the inequality holds for all functions on $n-1$ variables. Let $f$ be a function on $n$ variables, and define $f_0 := f_{x_n = 0}$ and $f_1 := f_{x_n = 1}$. Let $x := \Pr[f_0 = -1]$, $y := \Pr[f_1 = -1]$. Assume w.l.o.g.\ that $x \geq \frac{1}{2}$, $y \leq \frac{1}{2}$, and the overall majority label is $+1$.

Then:
\[
\mu_n x + (1 - \mu_n)y \leq \mu_n (1 - x) + (1 - \mu_n)(1 - y) \Rightarrow y \leq \frac{\frac{1}{2} - \mu_n x}{1 - \mu_n}.
\]
By the inductive hypothesis: $1 - x \leq \cost_l$, $y \leq \cost_r$. Now:
\[
4\left(\frac{1}{2} - \mu_n x\right) \leq \cost_l - x + 4(1 - \mu_n)x,
\]
and thus:
\[
4(1 - \mu_n)y \leq \cost_l - x + 4(1 - \mu_n)x.
\]
Multiply both sides by $\mu_n$:
\[
4\mu_n(1 - \mu_n)y \leq \mu_n(\cost_l - x + 4(1 - \mu_n)x).
\]
Also, from $y \leq \cost_r$, we get:
\[
(1 - \mu_n)y \leq (1 - \mu_n)\cost_r.
\]
Summing both:
\[
4\mu_n(1 - \mu_n)y + (1 - \mu_n)y \leq \mu_n(\cost_l - x + 4(1 - \mu_n)x) + (1 - \mu_n)\cost_r.
\]
Hence,
\[
\mu_n x + (1 - \mu_n)y \leq \mu_n \cost_l + (1 - \mu_n) \cost_r + 4\mu_n(1 - \mu_n)(x - y),
\]
which implies:
\[
\Aerror(f_v, \pm 1) \leq \iinf(f_v).
\]
Thus, by induction, for all $v$,
\[
\Aerror(f_v, \pm 1) \leq \iinf(f_v) \quad \Rightarrow \quad \Aerror(T^\circ, f) \leq \cost(T^\circ).
\]
\end{proof}

\subsection{Proof of Lemma~\ref{lemma:inf_is_less_than_var_d} (Total Influence vs. Variance and Depth)}
\label{app:lemma:inf_is_less_than_var_d}
\restateLemmaWithTitle{lemma:inf_is_less_than_var_d}{Total Influence vs. Variance and Depth}{\lemInfVarDepthBody}
\begin{proof}[Proof of Lemma~\ref{lemma:inf_is_less_than_var_d}]

We begin by expanding the total influence:

$$
\iinf(f) = \sum_i \iinf_i(f) = \sum_i \mathbb{E}_{x \sim \mu}[f(x) \neq f(x^{(i)})].
$$

If for some $x$ and coordinate $i$, we have $f(x) \neq f(x^{(i)})$, then in the decision tree $T$, there must exist a node $v$ that queries coordinate $i$, and where changing $x_i$ changes the output. The contribution of such a node is $\iinf_{i(v)}(f_v) \cdot p_v$, so:

$$
\iinf(f) = \sum_v \iinf_{i(v)}(f_v) \cdot p_v.
$$

Using Lemma~\ref{lemma:variance_bounds}, $\iinf_{i(v)}(f_v) \le \Var(f_v)$, we get:

$$
\iinf(f) \le \sum_v \Var(f_v) \cdot p_v = \sum_{d=1}^{D(T)} \sum_{v : \mathrm{depth}(v) = d} \Var(f_v) \cdot p_v.
$$

At each depth $d$, the subfunctions $f_v$ form a partition of $f$, so by the law of total variance:

$$
\sum_{v : \mathrm{depth}(v) = d} \Var(f_v) \cdot p_v \le \Var(f) \quad \Rightarrow \quad \iinf(f) \le D(T) \cdot \Var(f).
$$

Moreover, since $\iinf_{i(v)}(f_v) \le 1$ for all $v$, we also have:

$$
\iinf(f) = \sum_v \iinf_{i(v)}(f_v) \cdot p_v \le \sum_v p_v = \Delta.
$$

\end{proof}

\subsection{Proof of Lemma~\ref{lemma:cost_drops_as_score} (Cost Reduction per Step)}
\label{app:proof:cost_drops_as_score}
\restateLemmaWithTitle{lemma:cost_drops_as_score}{Cost Reduction per Step}{\lemCostDropsAsScoreBody}
\begin{proof}[Proof of Lemma~\ref{lemma:cost_drops_as_score}]
The cost of a tree is the sum of costs contributed by its leaves. When we split leaf $v$ into new leaves $v_0$ and $v_1$, the leaves in $\leaves{T^\circ} \setminus \{v\}$ are unaffected. Thus, the change in cost is:
\[\cost(T'^\circ) - \cost(T^\circ)  = \left( p_{v_0}\iinf(f_{v_0}) + p_{v_1}\iinf(f_{v_1}) \right) - p_{v}\iinf(f_v) \]
Let $\mu_{i(v)} = \Pr[x_{i(v)}=1]$. Then $p_{v_0} = p_v(1-\mu_{i(v)})$ and $p_{v_1} = p_v\mu_{i(v)}$. Substituting gives:
\begin{equation}
\begin{aligned}
\label{eq:cost_drops_as_score:0}
    \cost(T'^\circ) - \cost(T^\circ)  = p_v \left[ (1-\mu_{i(v)})\iinf(f_{v_0}) + \mu_{i(v)}\iinf(f_{v_1}) - \iinf(f_v) \right] 
\end{aligned}
\end{equation}
We relate the total influence of $f_v$ to its subfunctions. The influence of any variable $k \neq i(v)$ on $f_v$ is the weighted average of its influence on the subfunctions: $\iinf_k(f_v) = (1-\mu_{i(v)})\iinf_k(f_{v_0}) + \mu_{i(v)}\iinf_k(f_{v_1})$. The influence of the splitting variable $i(v)$ on the subfunctions $f_{v_0}$ and $f_{v_1}$ is zero, as it is fixed.
The total influence of $f_v$ can be decomposed as:
\begin{align*}
\iinf(f_v) &= \iinf_{i(v)}(f_v) + \sum_{k \neq i(v)} \iinf_k(f_v) \\
&= \iinf_{i(v)}(f_v) + \sum_{k \neq i(v)} \left[ (1-\mu_{i(v)})\iinf_k(f_{v_0}) + \mu_{i(v)}\iinf_k(f_{v_1}) \right] \\
&= \iinf_{i(v)}(f_v) + (1-\mu_{i(v)})\iinf(f_{v_0}) + \mu_{i(v)}\iinf(f_{v_1})
\end{align*}
Substituting this back into the expression for \cref{eq:cost_drops_as_score:0}

\begin{align*}
\cost(T'^\circ)-\cost(T^\circ)
&= p_v\Big[(1-\mu_{i(v)})\iinf(f_{v_0}) + \mu_{i(v)}\iinf(f_{v_1}) - \iinf(f_v)\Big] \\
&= p_v\Big[(1-\mu_{i(v)})\iinf(f_{v_0}) + \mu_{i(v)}\iinf(f_{v_1}) \\
&\qquad\quad - \big(\iinf_{i(v)}(f_v) + (1-\mu_{i(v)})\iinf(f_{v_0}) + \mu_{i(v)}\iinf(f_{v_1})\big)\Big] \\
&= -p_v\,\iinf_{i(v)}(f_v) = -\Score(v)
\end{align*}

Hence we have 
\begin{align*}\cost(T'^\circ) = \cost(T^\circ) - \Score(v).\end{align*}
\end{proof}

\subsection{Proof of Lemma~\ref{lemma:phase1_bound} (Phase 1: High Cost Reduction)}
\label{app:lemma:phase1_bound}
\restateLemmaWithTitle{lemma:phase1_bound}{Phase 1: High Cost Reduction}{\lemPhaseOneBoundBody}

\begin{proof}[Proof of Lemma~\ref{lemma:phase1_bound}]
During Phase~1 we have $C_j > \epsilon D_{opt}$. In this regime,
Lemma~\ref{lemma:score_lowerbound_high_cost} yields the one-step progress bound
\[
C_{j+1}
= C_j - \Score(l^*)
\le C_j - \frac{C_j}{j\,D_{opt}\,\Delta_{opt}}
= C_j\left(1-\frac{1}{j\,D_{opt}\,\Delta_{opt}}\right).
\]
Fix any integer $t\ge 1$. Iterating this inequality for $j=1,2,\dots,t$ gives
\[
C_{t+1}
\le C_1 \prod_{j=1}^{t}\left(1-\frac{1}{j\,D_{opt}\,\Delta_{opt}}\right).
\]
Taking logs and using $\ln(1-x)\le -x$ for $x\in(0,1)$, we obtain
\begin{align*}
C_{t+1}
&\le C_1 \exp\!\left(\sum_{j=1}^{t}\ln\left(1-\frac{1}{j\,D_{opt}\,\Delta_{opt}}\right)\right) \\
&\le C_1 \exp\!\left(-\sum_{j=1}^{t}\frac{1}{j\,D_{opt}\,\Delta_{opt}}\right)
= C_1 \exp\!\left(-\frac{1}{D_{opt}\,\Delta_{opt}}\sum_{j=1}^{t}\frac{1}{j}\right).
\end{align*}
Lower bounding the harmonic sum by $\sum_{j=1}^{t}\frac{1}{j}\ge \ln(t)$ yields
\[
C_{t+1}
\le C_1 \exp\!\left(-\frac{\ln(t)}{D_{opt}\,\Delta_{opt}}\right)
= C_1 \, t^{-1/(D_{opt}\,\Delta_{opt})}.
\]
Finally, using $C_1=\iinf(f)\le \Delta_{opt}$ (Lemma~\ref{lemma:inf_is_less_than_var_d}) we get the uniform bound
\[
C_{t+1} \le \Delta_{opt}\, t^{-1/(D_{opt}\,\Delta_{opt})}.
\]

Now choose
\[
T \;:=\; \left(\frac{\Delta_{opt}}{\epsilon D_{opt}}\right)^{D_{opt}\Delta_{opt}}.
\]
For any integer $t\ge T$ we have
\[
\Delta_{opt}\, t^{-1/(D_{opt}\Delta_{opt})}
\le \Delta_{opt}\, T^{-1/(D_{opt}\Delta_{opt})}
= \epsilon D_{opt},
\]
and therefore $C_{t+1}\le \epsilon D_{opt}$.

Let $J_1$ be the first index such that $C_{J_1+1}\le \epsilon D_{opt}$. Since $C_{t+1}\le \epsilon D_{opt}$ holds for every integer $t\ge T$. By minimality of $J_1$ we conclude
\[
J_1 \le \max\!\left(\left(\frac{\Delta_{opt}}{\epsilon D_{opt}}\right)^{D_{opt}\Delta_{opt}}\!,\; 1\right).
\]
\end{proof}

\subsection{Proof of Lemma~\ref{lemma:phase2_bound} (Phase 2: Termination)}
\label{app:lemma:phase2_bound}
\restateLemmaWithTitle{lemma:phase2_bound}{Phase 2: Termination}{\lemPhaseTwoBoundBody}
\begin{proof}[Proof of Lemma \ref{lemma:phase2_bound}]
Consider the steps where $\epsilon < \Aerror(T^\circ_j) \le C_j \le \epsilon D_{opt}$. We use the bound from Lemma \ref{lemma:lower_bound_on_score_general}. The cost reduction at step $j$ (where $j > J_1$) is at least $\Score(l^*) \ge \frac{\epsilon}{j \cdot \Delta_{opt}}$.
Let $J_2$ be the number of steps in this phase. The total cost reduction required is at most $C_{J_1+1} \le \epsilon D_{opt}$.
The total reduction is $\sum_{j=J_1}^{J_1+J_2} \Score(l^*_j) \ge \sum_{j=J_1}^{J_1+J_2} \frac{\epsilon}{j \cdot \Delta_{opt}}$. We require:
\[ \frac{\epsilon}{\Delta_{opt}} \sum_{j=J_1}^{J_1+J_2} \frac{1}{j} \ge \epsilon D_{opt} \implies \sum_{j=J_1}^{J_{total}} \frac{1}{j} \ge \Delta_{opt} D_{opt} \]
Using the integral bound $\sum_{k=a}^{b} \frac{1}{k} > \ln(\frac{b+1}{a})$, a sufficient condition for termination is:
\[ \ln\left(\frac{J_{total}+1}{J_1}\right) \ge \Delta_{opt} D_{opt} \implies J_{total}+1 \ge J_1e^{\Delta_{opt} D_{opt}} \]
This implies $J_{total} \le J_1 e^{\Delta_{opt} D_{opt}}$.
\end{proof}

\subsection{Proof of Theorem~\ref{theorem:main_result_practical} (Robust Performance with Approximate Scores)}
\label{app:proof:main_result_practical}
\restateTheoremWithTitle{theorem:main_result_practical}{Robust Performance with Approximate Scores}{\thmMainResultPracticalBody}
\begin{proof}[Proof of Theorem~\ref{theorem:main_result_practical}]
We follow exactly the same potential-function analysis used to prove Theorem~\ref{theorem:main_result}. Let $T^\circ_j$ denote the bare tree after $j-1$ splits, and let $C_j := \cost(T^\circ_j)$ be its cost.

As in the proof of Theorem~\ref{theorem:main_result}, Lemma~\ref{lemma:error_and_cost} gives
\(\Aerror(T^\circ_j,f)\le C_j\),
so the algorithm must terminate once $C_j \le \epsilon$.
Also, Lemma~\ref{lemma:inf_is_less_than_var_d} gives the initial bound
\(C_1 = \iinf(f) \le \Delta_{opt}\).

The only change is that the leaf chosen in step $j$ is not necessarily the maximizer of $\Score(\cdot)$, but satisfies
\[
\Score(l'_j) \ge \frac{1}{4}\max_{l\in\leaves{T^\circ_j}} \Score(l).
\]
Therefore, in any inequality of the form
\(\max_l \Score(l) \ge \text{(lower bound)}\),
we can replace it by
\(\Score(l'_j) \ge \frac{1}{4}\cdot\text{(lower bound)}\).
We now redo the two phases with this loss.

\paragraph{Phase 1: high cost.}
Assume $C_j > \epsilon D_{opt}$. By Lemma~\ref{lemma:score_lowerbound_high_cost}, in this regime
\[
\max_{l} \Score(l) \ge \frac{C_j}{j\,D_{opt}\,\Delta_{opt}}.
\]
Hence
\[
\Score(l'_j) \ge \frac{1}{4}\cdot\frac{C_j}{j\,D_{opt}\,\Delta_{opt}}.
\]
By Lemma~\ref{lemma:cost_drops_as_score}, the cost drops by the score of the split leaf, so
\[
C_{j+1} = C_j - \Score(l'_j)
\le C_j\left(1-\frac{1}{4j\,D_{opt}\,\Delta_{opt}}\right).
\]
Iterating as in the proof of Lemma~\ref{lemma:phase1_bound} (using $\ln(1-x)\le -x$ and a harmonic-series lower bound) yields that for every integer $t\ge 1$,
\[
C_{t+1} \le C_1\, t^{-1/(4D_{opt}\Delta_{opt})} \le \Delta_{opt}\, t^{-1/(4D_{opt}\Delta_{opt})}.
\]
In particular, choosing
\[
T_1 := \left(\frac{\Delta_{opt}}{\epsilon D_{opt}}\right)^{4D_{opt}\Delta_{opt}},
\]
we get that for all $t\ge T_1$,
\(C_{t+1}\le \epsilon D_{opt}\).
Let $J_1$ be the first index such that $C_{J_1+1}\le \epsilon D_{opt}$. Then
\[
J_1 \le \max\left(\left(\frac{\Delta_{opt}}{\epsilon D_{opt}}\right)^{4D_{opt}\Delta_{opt}},1\right).
\]
\paragraph{Phase 2: termination once error remains high.}
Now consider steps with
\(\epsilon < \Aerror(T^\circ_j,f)\le C_j \le \epsilon D_{opt}\).
By Lemma~\ref{lemma:lower_bound_on_score_general}, for such a step we have
\[
\max_{l} \Score(l) \ge \frac{2\epsilon}{j\,\Delta_{opt}}.
\]
Thus
\[
\Score(l'_j) \ge \frac{1}{4}\cdot\frac{2\epsilon}{j\,\Delta_{opt}} = \frac{\epsilon}{2j\,\Delta_{opt}}.
\]
Again using Lemma~\ref{lemma:cost_drops_as_score}, we get
\[
C_{j+1} \le C_j - \frac{\epsilon}{2j\,\Delta_{opt}}.
\]
\paragraph{Concluding Phase 2.}
Unrolling the above recurrence from step $J_1+1$ up to some step $J\ge J_1+1$ gives
\[
C_{J+1}
\le C_{J_1+1} - \frac{\epsilon}{2\Delta_{opt}}\sum_{j=J_1+1}^{J} \frac{1}{j}.
\]
Since we enter Phase~2 with $C_{J_1+1}\le \epsilon D_{opt}$, it suffices to ensure that
\[
\frac{\epsilon}{2\Delta_{opt}}\sum_{j=J_1+1}^{J} \frac{1}{j} \ge \epsilon D_{opt},
\]
i.e.,
\[
\sum_{j=J_1+1}^{J} \frac{1}{j} \ge 2\Delta_{opt}D_{opt}.
\]
Using the standard harmonic-series bound
\(\sum_{j=a}^{b}\frac{1}{j} \ge \ln\left(\frac{b+1}{a}\right)\),
a sufficient condition is
\[
\ln\left(\frac{J+1}{J_1+1}\right) \ge 2\Delta_{opt}D_{opt}
\quad\Longrightarrow\quad
J+1 \ge (J_1+1)e^{2\Delta_{opt}D_{opt}}.
\]
Therefore, the algorithm must terminate by
\[
J_{total} \le (J_1+1)e^{2\Delta_{opt}D_{opt}}.
\]
Finally, using the Phase~1 bound
\(J_1 \le \max\{(\Delta_{opt}/(\epsilon D_{opt}))^{4\Delta_{opt}D_{opt}},1\}\)
and the crude inequality $(J_1+1)e^{2\Delta_{opt}D_{opt}} \le J_1\,e^{4\Delta_{opt}D_{opt}}$, we get
\[
J_{total}
\le \max\left(\left(\frac{e\cdot\Delta_{opt}}{\epsilon D_{opt}}\right)^{4\Delta_{opt}D_{opt}},\; e^{4\Delta_{opt}D_{opt}}\right),
\]
which is exactly the claimed size bound (up to the final $+1$ leaf).
\end{proof}
%%%%%%%%%%%%%%%%%%%%%%%%%%%%%%%%%%%%%%%%%%%%%%%%%%%%%%%%%%%%%%%%%%%%%%%%%%%%%%%
%%%%%%%%%%%%%%%%%%%%%%%%%%%%%%%%%%%%%%%%%%%%%%%%%%%%%%%%%%%%%%%%%%%%%%%%%%%%%%%

\subsection{Proof of Lemma~\ref{lemma:score_unbiased_estimator} (Unbiasedness of the Score Estimator)}
\label{app:lemma:score_unbiased_estimator}
\restateLemmaWithTitle{lemma:score_unbiased_estimator}{Unbiasedness of the Score Estimator}{\lemScoreUnbiasedEstimatorBody}
\begin{proof}[Proof of Lemma~\ref{lemma:score_unbiased_estimator}]

We show that $\widehat{\Score}(l, i, E_i)$ is an unbiased estimator of $\Score(l, i) = \Pr[x \text{ reaches } l] \cdot \iinf_i(f_l)$. By definition:

$$
\widehat{\Score}(l, i, E_i) = \frac{1}{|E_i|} \sum_{(x, x^{(i)}) \in E_i} \mathbf{1}[x \text{ and } x^{(i)} \text{ reach } l] \cdot \mathbf{1}[f(x) \neq f(x^{(i)})].
$$

Taking expectation:

$$
\mathbb{E}[\widehat{\Score}(l, i, E_i)] = \Pr[x, x^{(i)} \text{ reach } l] \cdot \Pr[f(x) \neq f(x^{(i)}) \mid x, x^{(i)} \text{ reach } l].
$$

Since $i$ is not fixed in the path to $l$, $x$ reaches $l$ iff $x^{(i)}$ does, so:

$$
\Pr[x, x^{(i)} \text{ reach } l] = \Pr[x \text{ reaches } l] = p_l,
$$

and

$$
\Pr[f(x) \neq f(x^{(i)}) \mid x, x^{(i)} \text{ reach } l] = \iinf_i(f_l).
$$

Thus:

$$
\mathbb{E}[\widehat{\Score}(l, i, E_i)] = p_l \cdot \iinf_i(f_l) = \Score(l, i).
$$

\end{proof}

\subsection{Proof of Lemma~\ref{lem:excess_error}}
\label{app:proof:excess_error}
\restateLemmaWithTitle{lem:excess_error}{}{\lemExcessErrorBody}
\begin{proof}[Proof of \cref{lem:excess_error}]
Let $\mathcal{H}$ be the hypothesis space of all possible labelings of the $l$ leaves of $T^\circ$. The size of this space is $|\mathcal{H}| = 2^l$. Let $S$ be the set of $\hat{m}_j$ samples. For any hypothesis $h \in \mathcal{H}$, we denote its true error by $\Aerror(h, f)$ and its empirical error on $S$ by $\widehat{\Aerror}_S(h)$.

The tree $T$ is the result of the Empirical Risk Minimization (ERM) principle, meaning $T = \arg\min_{h \in \mathcal{H}} \widehat{\Aerror}_S(h)$. The tree $T^*$ is the true error minimizer, $T^* = \arg\min_{h \in \mathcal{H}} \Aerror(h, f)$.

We use a standard uniform convergence bound for finite hypothesis spaces. For a chosen confidence parameter $\delta' \in (0,1)$, with probability at least $1-\delta'$, the following holds for all $h \in \mathcal{H}$ simultaneously:
$$ |\Aerror(h, f) - \widehat{\Aerror}_S(h)| \le \sqrt{\frac{\ln|\mathcal{H}| + \ln(2/\delta')}{2\hat{m}_j}} $$
Let $\epsilon_{\text{uc}} = \sqrt{\frac{\ln|\mathcal{H}| + \ln(2/\delta')}{2\hat{m}_j}}$. Assuming this high-probability event occurs, we can bound the true error of $T$:
\begin{align*}
\Aerror(T, f) &\le \widehat{\Aerror}_S(T) + \epsilon_{\text{uc}} && \text{(by the uniform convergence bound)} \\
&\le \widehat{\Aerror}_S(T^*) + \epsilon_{\text{uc}} && \text{(since $T$ is the ERM hypothesis, $\widehat{\Aerror}_S(T) \le \widehat{\Aerror}_S(T^*)$)} \\
&\le \Aerror(T^*, f) + \epsilon_{\text{uc}} + \epsilon_{\text{uc}} && \text{(by the uniform convergence bound on $T^*$)} \\
&= \Aerror(T^*, f) + 2\epsilon_{\text{uc}}
\end{align*}
This gives the excess error bound $\Aerror(T, f) - \Aerror(T^*, f) \le 2\epsilon_{\text{uc}}$. By definition of $T^*$ as the optimal hypothesis, $\Aerror(T, f) \ge \Aerror(T^*, f)$, so the difference is non-negative. Therefore, $|\Aerror(T, f) - \Aerror(T^*, f)| = \Aerror(T, f) - \Aerror(T^*, f)$.

We set our confidence parameter $\delta' = \frac{\delta}{8j^2}$. Substituting this and $|\mathcal{H}|=2^l$ into the excess error bound gives:

\begin{align*}
|\Aerror(T, f) - \Aerror(T^*, f)| &\le 2\sqrt{\frac{l\ln 2 + \ln(2/\delta')}{2\hat{m}_j}} 
= \sqrt{\frac{4(l\ln 2 + \ln(16j^2/\delta))}{2\hat{m}_j}} \\
&= \sqrt{\frac{2(l\ln 2 + \ln(16j^2/\delta))}{\hat{m}_j}}
\end{align*}

% \begin{align*}
% |\Aerror(T, f) - \Aerror(T^*, f)| &\le 2\sqrt{\frac{\ln(2^l) + \ln(2/\delta')}{2\hat{m}_j}} \\
% &= 2\sqrt{\frac{l\ln(2) + \ln(2 / (\delta/8j^2))}{2\hat{m}_j}} \\
% &= \sqrt{\frac{4(l\ln(2) + \ln(16j^2/\delta))}{2\hat{m}_j}} \\
% &= \sqrt{\frac{2(l\ln(2) + \ln(16j^2/\delta))}{\hat{m}_j}}
% \end{align*}

This bound holds with probability at least $1 - \delta' = 1 - \frac{\delta}{8j^2}$, which completes the proof.
\end{proof}

\subsection{Proof of Lemma~\ref{lem:estimation} (Reliable Error Estimation)}
\label{app:proof:estimation}
\restateLemmaWithTitle{lem:estimation}{Reliable Error Estimation}{\lemEstimationBody}
\begin{proof}[Proof of Lemma~\ref{lem:estimation}]
We prove each case separately, assuming the high-probability event from Lemma \ref{lem:excess_error} holds. Let $\epsilon_1 = |\Aerror(T, f) - \Aerror(T^*, f)|$. From the condition on $\hat{m}_j$ and Lemma \ref{lem:excess_error}, we have $\epsilon_1 \le \frac{\epsilon}{8}$.

\textbf{Case 1: Assume $\Aerror(T^*, f) \le \frac{\epsilon}{2}$.}
Our goal is to show that the empirical error is unlikely to be high. First, we bound the true error of our tree $T$:
$$ \Aerror(T, f) \le \Aerror(T^*, f) + \epsilon_1 \le \frac{\epsilon}{2} + \frac{\epsilon}{8} = \frac{5\epsilon}{8} $$
We want to bound the probability $\Pr[\widehat{\Aerror}_S(T) > \frac{3\epsilon}{4}]$. This event requires the empirical error to deviate from its mean, $\Aerror(T, f)$, by at least $\frac{3\epsilon}{4} - \Aerror(T, f) \ge \frac{3\epsilon}{4} - \frac{5\epsilon}{8} = \frac{\epsilon}{8}$. By a one-sided Hoeffding's inequality:
\begin{align*}
\Pr\left[\widehat{\Aerror}_S(T) > \frac{3\epsilon}{4}\right] \le& \Pr\left[\widehat{\Aerror}_S(T) - \Aerror(T, f) \ge \frac{\epsilon}{8}\right] \\\le & e^{-2m''_j (\epsilon/8)^2} = e^{-m''_j \epsilon^2/32} 
\end{align*} 
Given $m''_j \ge \frac{32}{\epsilon^2}\ln(\frac{16j^2}{\delta})$, this probability is bounded by $\frac{\delta}{16j^2}$, which is less than the required $\frac{\delta}{8j^2}$.

\textbf{Case 2: Assume $\Aerror(T, f) > \epsilon$.}
Our goal is to show that the empirical error is unlikely to be low. We want to bound the probability $\Pr[\widehat{\Aerror}_S(T) \le \frac{3\epsilon}{4}]$. This event requires the empirical error to be smaller than its mean by at least $\Aerror(T, f) - \frac{3\epsilon}{4} > \epsilon - \frac{3\epsilon}{4} = \frac{\epsilon}{4}$. By a one-sided Hoeffding's inequality:
\begin{align*} \Pr\left[\widehat{\Aerror}_S(T) \le \frac{3\epsilon}{4}\right] \le&\Pr\left[\Aerror(T, f) - \widehat{\Aerror}_S(T) \ge \frac{\epsilon}{4}\right] \\\le& e^{-2m''_j (\epsilon/4)^2} = e^{-m''_j \epsilon^2/8} 
\end{align*}
Given our condition on $m''_j$, we have $e^{-m''_j \epsilon^2/8} \le e^{-4\ln(16j^2/\delta)} \ll \frac{\delta}{8j^2}$. 
\end{proof}

% This document was modified from the file originally made available by
% Pat Langley and Andrea Danyluk for ICML-2K. This version was created
% by Iain Murray in 2018, and modified by Alexandre Bouchard in
% 2019 and 2021 and by Csaba Szepesvari, Gang Niu and Sivan Sabato in 2022.
% Modified again in 2023 and 2024 by Sivan Sabato and Jonathan Scarlett.
% Previous contributors include Dan Roy, Lise Getoor and Tobias
% Scheffer, which was slightly modified from the 2010 version by
% Thorsten Joachims & Johannes Fuernkranz, slightly modified from the
% 2009 version by Kiri Wagstaff and Sam Roweis's 2008 version, which is
% slightly modified from Prasad Tadepalli's 2007 version which is a
% lightly changed version of the previous year's version by Andrew
% Moore, which was in turn edited from those of Kristian Kersting and
% Codrina Lauth. Alex Smola contributed to the algorithmic style files.

\section{Empirical Evaluation}
\label{app:experiments}

In this section we empirically evaluate the practical, parameter-free top-down heuristic (Algorithm~\ref{alg:buildtopdowndtPractical}). The implementation matches the pseudocode of \cref{alg:buildtopdowndtPractical}: at each step it estimates the score $\Score(l, i)$ from sample pairs (\cref{eq:score_estimation}), assigns leaf labels by majority vote on a separate sample set, and tests termination by estimating the empirical error of the labeled tree on a third independent sample set, augmenting all sample sets adaptively as the tree grows. All experiments are reported as the mean across $6$ independent repetitions, with error bars showing $\pm 1$ standard deviation.

\paragraph{Setup.} We construct ground-truth Boolean targets $f:\{0,1\}^n\to\{\pm 1\}$ from two structures: a fully balanced decision tree (\emph{balanced DT}) and a chain-like decision tree (\emph{unbalanced DT}). Inputs are drawn from a product distribution where each coordinate is independently $1$ with probability $p$ and $0$ with probability $1-p$; we sweep $p\in\{0.5, 0.3, 0.1\}$, ranging from the uniform distribution to a heavily biased one.

\paragraph{Tree size vs.\ $\epsilon$ (\cref{fig:expt:results}a).} We fix $n=20$, $\delta=0.1$, and run Algorithm~\ref{alg:buildtopdowndtPractical} for $\epsilon\in\{0.10, 0.15, 0.20, 0.25, 0.30\}$. The targets are a balanced DT of depth $4$ (16 leaves) and an unbalanced DT with $16$ leaves. \cref{fig:expt:results}(a) plots the size of the recovered tree against $1/\epsilon$ on a log--log scale. The recovered trees stay close in size to the ground-truth target across all configurations, in line with the size bound of Theorem~\ref{theorem:main_result_practical}.

\paragraph{Tree size vs.\ dimension (\cref{fig:expt:results}b).} We fix $\epsilon=0.15$, $\delta=0.1$, and sweep $n\in\{3, 4, 5, 6, 7\}$. The targets are a balanced DT of depth $3$ (8 leaves) and an unbalanced DT with $8$ leaves. \cref{fig:expt:results}(b) reports the size of the recovered tree as $n$ grows. As predicted by Theorem~\ref{theorem:main_result}, the size grows mildly with $n$ across all three product distributions, with no exponential blow-up; on the unbalanced target, the size remains close to the ground-truth $8$ leaves even as $n$ varies.

\begin{figure*}[ht]
    \centering
    \begin{subfigure}[t]{0.49\textwidth}
        \centering
        \includegraphics[width=\linewidth]{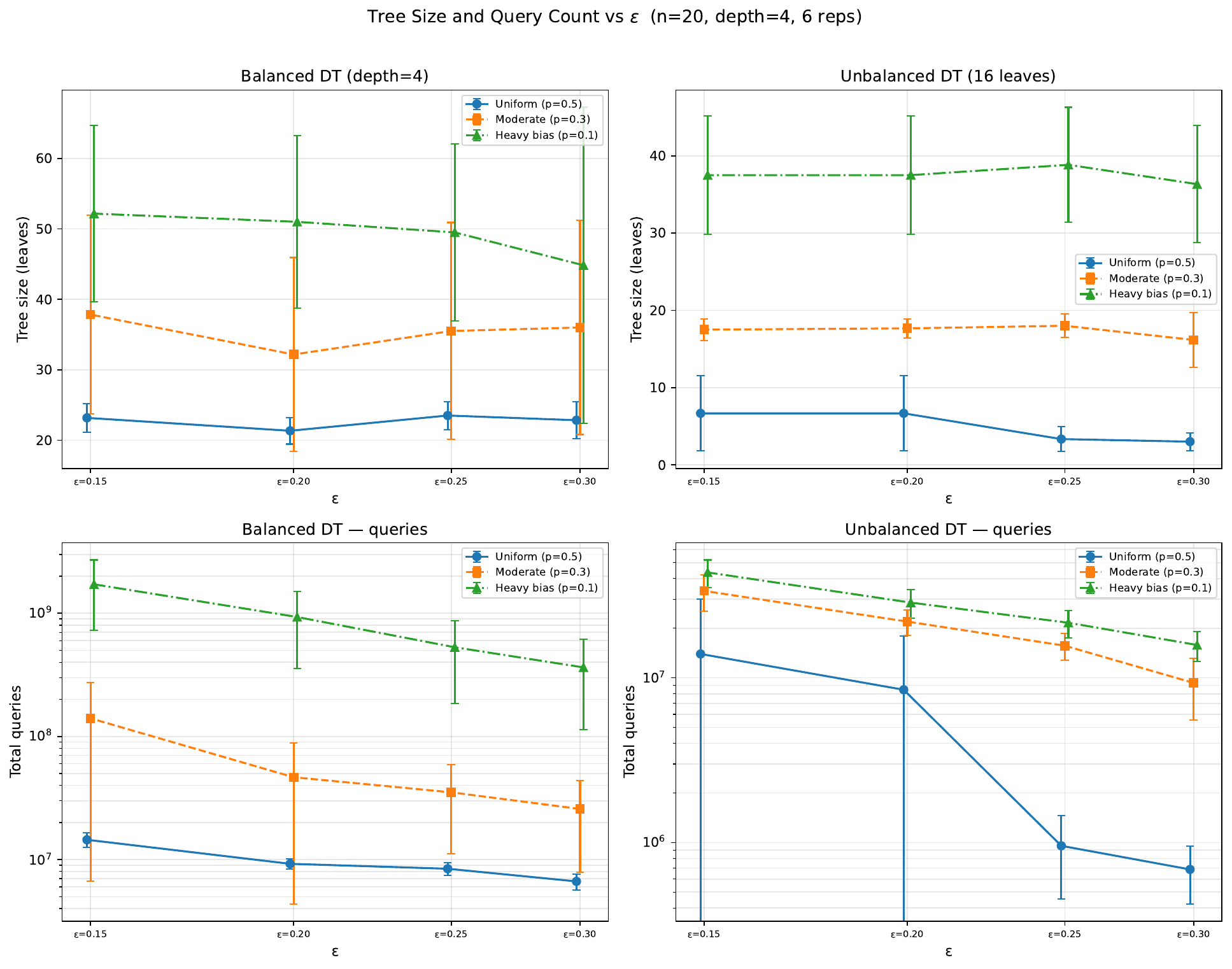}
        \caption{Tree size vs.\ $1/\epsilon$ (log--log) with $n=20$, $\delta=0.1$, target depth $4$ (16 leaves) for the balanced DT and 16 leaves for the unbalanced DT, $6$ repetitions per configuration.}
        \label{fig:expt:size_vs_eps}
    \end{subfigure}
    \hfill
    \begin{subfigure}[t]{0.49\textwidth}
        \centering
        \includegraphics[width=\linewidth]{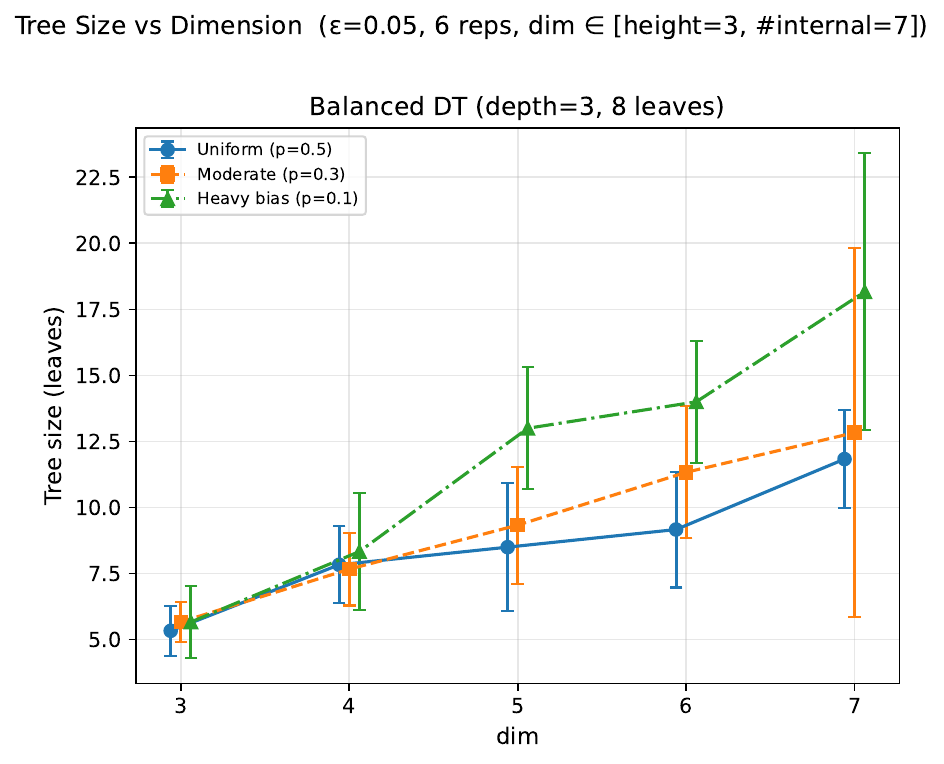}
        \caption{Tree size vs.\ dimension $n\in\{3,\dots,7\}$ with $\epsilon=0.15$, $\delta=0.1$, target depth $3$ (up to $8$ leaves), $6$ repetitions per configuration.}
        \label{fig:expt:size_vs_n}
    \end{subfigure}
    \caption{Empirical evaluation of the practical, parameter-free top-down heuristic (Algorithm~\ref{alg:buildtopdowndtPractical}) on a balanced DT (left panel of each subfigure) and an unbalanced DT (right panel of each subfigure). Curves correspond to product distributions with bias $p\in\{0.5,\,0.3,\,0.1\}$; error bars show $\pm 1$ standard deviation across the $6$ repetitions.}
    \label{fig:expt:results}
\end{figure*}

\paragraph{Takeaways.} Across all configurations, the recovered tree stays close in size to the ground-truth target and well within the size bound of Theorem~\ref{theorem:main_result_practical}, and the sample complexity grows polynomially in $n$ and $1/\epsilon$, in agreement with the runtime analysis of Section~\ref{sec:practical}.

\section{Worked Examples: Balanced and Path-Like Trees}
\label{app:examples}

In this appendix we work through two examples that illustrate the role of the maximum depth $D_{opt}$ and the average depth $\Delta_{opt}$ in the bound of Theorem~\ref{theorem:main_result}, and in particular show why the mixed dependence on the two parameters can be exponentially tighter than a bound stated in $D_{opt}$ alone.

\subsection{Balanced (Full Binary) Tree}
\label{app:examples:balanced}
Suppose the optimal decision tree $T_{opt}$ for $f$ is a complete binary tree of size $s$ over $n=\log_2 s$ variables. Then every leaf is at depth $\log_2 s$, so $D_{opt}=\Delta_{opt}=\log_2 s$. Substituting into Theorem~\ref{theorem:main_result} gives
\[
\size(T) \;\le\; \max\!\left(\!\left(\frac{e\,\Delta_{opt}}{\epsilon\,D_{opt}}\right)^{\Delta_{opt}D_{opt}}\!\!,\; e^{\Delta_{opt}D_{opt}}\right) \;=\; \left(\frac{e}{\epsilon}\right)^{\log^2 s} \;=\; s^{\log s\,\cdot\,\log(e/\epsilon)},
\]
which is quasi-polynomial in $s$. This is the regime where our bound coincides asymptotically with --- and slightly improves on --- the bound of \cite{TopDownInduction}.

\subsection{Path-Like (Highly Unbalanced) Tree}
\label{app:examples:path}
At the other extreme, consider a \emph{path tree} (sometimes called a chain or comb tree) on $n$ variables under the uniform distribution over $\{0,1\}^n$: the root queries $x_1$; one of its children is a leaf and the other queries $x_2$; one of \emph{that} child's children is a leaf and the other queries $x_3$; and so on. Thus the path tree has $n+1$ leaves, the deepest of which sits at depth $n$.

\paragraph{Maximum vs. average depth.} The maximum depth is $D_{opt} = n$. To compute the average depth, observe that under the uniform distribution each internal node has a $1/2$ probability of routing the input into its leaf child; hence a random input reaches the leaf at depth $k$ with probability $2^{-k}$ for $k=1,\dots,n-1$, and reaches the deepest leaf with probability $2^{-(n-1)}$. Therefore
\[
\Delta_{opt} \;=\; \sum_{k=1}^{n-1} k\cdot 2^{-k} \;+\; n\cdot 2^{-(n-1)} \;\le\; \sum_{k=1}^{\infty} k\cdot 2^{-k} \;=\; 2.
\]
So while $D_{opt}=n$ grows linearly with the number of variables, $\Delta_{opt}$ is bounded by an absolute constant.

\paragraph{Why the mixed dependence helps.} Plugging into Theorem~\ref{theorem:main_result} yields a size bound dominated by $\Delta_{opt}\,D_{opt}\le 2n$ in the exponent. By contrast, a bound in terms of $D_{opt}^2 = n^2$ alone would be exponentially worse: the resulting size would be $\exp(\Theta(n^2))$ rather than $\exp(\Theta(n))$. This example illustrates why our analysis tracks both $D_{opt}$ and $\Delta_{opt}$ separately: in any regime where the leaves of $T_{opt}$ are not all equally deep, $\Delta_{opt}$ can be dramatically smaller than $D_{opt}$ and the mixed dependence is a strict improvement over a bound stated in $D_{opt}$ alone.

\end{document}